\documentclass{article}
\usepackage{iclr2026/iclr2026_conference,times}

\usepackage{amsmath,amssymb,amsthm,mathtools}
\usepackage{microtype}
\usepackage{booktabs}
\usepackage{enumitem}
\usepackage{graphicx}
\usepackage{xcolor}
\usepackage{hyperref}
\usepackage{url}

\newtheorem{theorem}{Theorem}

\newtheorem{proposition}{Proposition}
\newtheorem{corollary}{Corollary}
\newtheorem{definition}{Definition}
\newtheorem{remark}{Remark}

\newcommand{\Sset}{\mathcal S}
\newcommand{\Aset}{\mathcal A}
\newcommand{\Q}{\mathcal Q}

\newcommand{\E}{\mathbb E}

\newcommand{\ip}[2]{\left\langle #1,#2\right\rangle}
\newcommand{\norm}[1]{\left\lVert #1\right\rVert}
\newcommand{\argmin}{\operatorname*{arg\,min}}

\newcommand{\DReg}{\operatorname{DReg}}
\newcommand{\diam}{\operatorname{diam}}

\RequirePackage{xcolor}
\definecolor{KomoriBlack}{HTML}{050505}
\definecolor{KomoriInk}{HTML}{151514}
\definecolor{KomoriGraphite}{HTML}{373633}
\definecolor{KomoriMuted}{HTML}{625E57}
\definecolor{KomoriFaint}{HTML}{928B7F}
\definecolor{KomoriLine}{HTML}{E2DDD3}
\definecolor{KomoriPaper}{HTML}{FBFAF6}
\definecolor{KomoriWarm}{HTML}{F1EDE3}
\definecolor{KomoriAccent}{HTML}{8A6A2E}
\definecolor{KomoriSky}{HTML}{14B8FF}
\definecolor{KomoriBlue}{HTML}{1D7CFF}
\definecolor{KomoriMint}{HTML}{12B981}
\definecolor{KomoriPink}{HTML}{FF4FA3}
\definecolor{KomoriViolet}{HTML}{8B5CF6}
\definecolor{KomoriCite}{HTML}{00C08B}
\definecolor{KomoriHref}{HTML}{18A9FF}
\color{KomoriInk}

\hypersetup{
	colorlinks=true,
	linkcolor=KomoriPink,
	citecolor=KomoriCite,
	urlcolor=KomoriHref,
	filecolor=KomoriViolet,
	pdfborder={0 0 0}
}

\title{Priced Motion Through Optimal Faces: A Normal-Fan Geometry for Non-Stationary Adversarial MDPs}

\author{%
	Kai Hidajat \\
	Komori Research \\
	\texttt{hidajat@komori.ai}
}

\iclrfinalcopy 

\begin{document}

\maketitle

\begin{abstract}
In a changing decision problem, standard dynamic-regret analyses have often equated the cost of non-stationarity to how far loss moves. However, it is simultaneously possible for a loss sequence to travel far and retain the same optimal policy, or for a small movement in loss to force the optimal policy to change completely. Thus, the size of the movement through loss variation, transition variation, or comparator path length describe the adversary's motion, but not the cost of that motion to the control problem. For a more faithful analytic interpretation, this paper develops a normal-fan geometry for finite-horizon adversarial MDPs with fixed transitions. Occupancy measures form a polytope, and each loss vector exposes an optimal face of that polytope. Non-stationarity in rewards is therefore a path through the normal fan, where motion inside one cone leaves the optimal face unchanged, while crossing a wall may carry regret. We pose the notion of a face-crossing price, which is the minimum regret incurred by remaining on the previous optimal face under the new loss. For any learner that tracks the previous face, dynamic regret decomposes exactly into intrinsic priced face motion plus within-face selection error. The resulting theory separates consequential from harmless non-stationarity, where loss variation can be arbitrarily large at zero price, and identical one-coordinate variation can hide horizon-scale differences in regret. 
\end{abstract}

\section{Introduction}

Consider the Recommender system \citep{Schafer_Konstan_etal_1999} which gives personalised recommendations to users based on learned user preferences. The user's interaction with this system is sequential, and with each user action, the
Recommender system decides what to recommend. We can formulate this problem as a stationary Markov Decision Process (MDP) \citep{Puterman_2014,Shani_Brafman_etal_2015}, where we try to learn the user's preference (\(r\)) as well as how their interactions tend to evolve (\(P(s^\prime \mid s, a)\)). However, a user's preference is ever-changing, and we cannot assume that an identical state-action should yield the same reward from one episode to the next \citep{Huleihel_Pal_etal_2021}.

In the fixed-transition non-stationary RL setting, the optimal policy drifts as the rewards and losses change. Existing analyses bound dynamic regret by budgeting the variation of the losses or the comparator path \citep{Besbes_Gur_etal_2015,Zhao_Zhang_etal_2024,Fei_Yang_etal_2020,Zhao_Li_etal_2022}, but these metrics are external to the control problem. That is, such budgets only capture the magnitude of change, and such magnitudes alone do not determine the cost a learner pays.

As an example, we can construct two cases in which loss displacement is inversely proportional to cost. In one, a loss travels far yet keeps the same policy optimal, meaning that a stationary learner pays nothing. In the other, a small step crosses a decision boundary and flips the optimal policy to the far side of the problem, costing the stationary learner regret of order the horizon. For the Recommender system case, suppose two film recommendation policies share the same regret, where one recommends documentaries and one recommends romcoms. Inside the ``documentaries still best'' region the optimal policy does not change, and a large reward movement there barely costs anything. Conversely, a tiny preference shift can move the user across the indifference boundary, making the old ``documentaries still best'' policy systematically wrong, and causing large regret if the learner stays stationary.

The occupancy representation of an MDP gives us a geometric formalisation of decision regions \citep{Puterman_2014,Altman_2021}. For a fixed transition kernel, each policy induces an occupancy measure, the vector of expected visitation frequencies over state-action pairs. There are a few properties which make occupancies solutions to the aforementioned discrepancy in variation. First, policies that share an occupancy attain equal value \(V^\pi(\ell)\) under every loss, making the occupancy a sufficient statistic for value. Next, expected loss is then linear in this vector. Consequently, the set of policies can be represented by a convex occupancy polytope, and the region of losses for which a given policy is optimal is a normal cone of that polytope. Rotating the loss leaves that optimal face fixed across the normal cone, changing only when the loss crosses a wall. The collection of such cones which together correspond to all the unique faces of the occupancy polytope is called the normal fan, which is the map carrying each loss to its optimal face \citep{Ziegler_1995,Rockafellar_1970,Lu_Robinson_2008}.

While loss remains within a cone and the optimal face stays fixed, a learner who holds the face pays no regret regardless of loss variation. A cost arises only when the loss crosses into a cone whose optimal face excludes the learner's occupancy. In such a case, holding the previous optimal face under the new loss then costs the least regret achievable over its points. Thus, by searching through the entire fan, we are able to exactly price the motion of losses. However, this search is a combinatorial problem, and not tractable to enumerate. In this paper, we show that no search is ever required. The price equals an expected optimal Bellman advantage under the current loss, measured against the optimal value function that loss induces. The advantage is accumulated along whichever policy was optimal one round earlier. A single value backup supplies these advantages, and the remaining minimization runs only over the previous optimal face, which the learner already holds. 

\begin{figure}[t]
	\centering
	\includegraphics[width=0.75\linewidth]{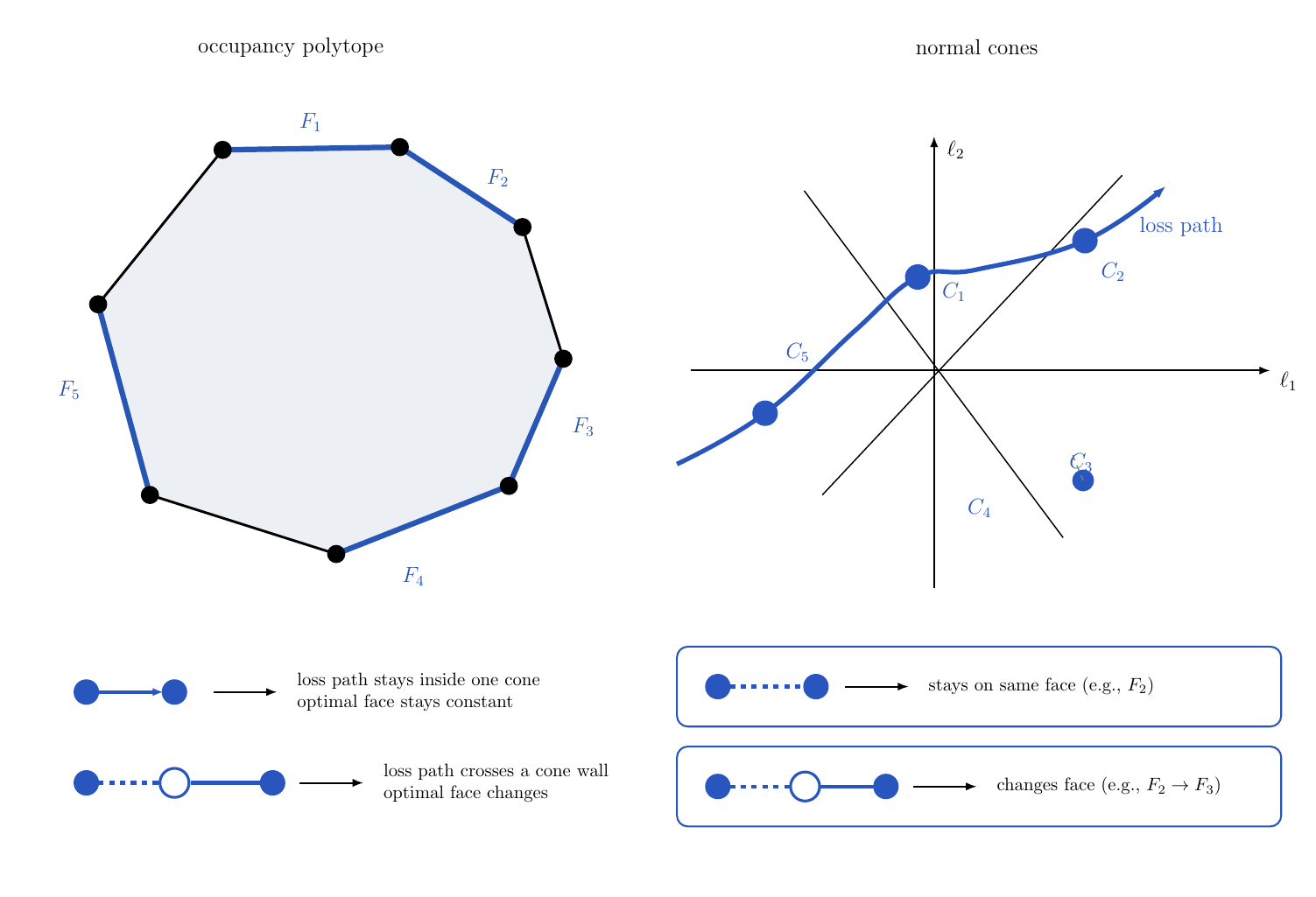}
	\caption{Occupancy faces and normal cones. Left, the occupancy polytope, whose faces are the candidate sets of optimal occupancies. Right, the normal fan in loss space, which partitions losses into cones by the face each one selects. A loss path inside one cone leaves the optimal face unchanged and is free, while a path that crosses a wall changes the optimal face and may be expensive.}
	\label{fig:normal-fan}
	\vspace{-1em}
\end{figure}


\paragraph{Contributions.}
\begin{enumerate}[leftmargin=1.5em,itemsep=2pt,topsep=2pt]
	\item We define the face-crossing price, the cost of remaining on the previous optimal face under the new loss, and prove that for a learner that holds that face, dynamic regret splits exactly into this price and a within-face selection error (Theorem~\ref{thm:decomp}).
	\item We prove that in any finite-horizon MDP the price equals a minimal expected optimal Bellman advantage over the policies that were optimal a round earlier (Theorem~\ref{thm:bellman}), so a value backup over the old face computes it directly.
	\item We establish three consequences. First, loss variation can be arbitrarily large at zero price, and at fixed variation it can hide a factor $H$ (Proposition~\ref{prop:largevar} and Theorem~\ref{thm:varsep}). Second, a crossing forced early in the horizon costs more than a late one by the factor $H-h+1$ (Corollary~\ref{cor:causal}). Lastly, a degenerate optimal face leaves a selection error the price cannot absorb (Proposition~\ref{prop:degsel}).
\end{enumerate}
A full-information cone detector (Proposition~\ref{prop:conedetect}) and a secondary reading of mirror-descent and trust-region updates as approximate trackers of the fan appear in the appendices. We test our theory on three empirical benchmarks in Section~\ref{sec:experiments}.

\section{Related work}
\label{sec:related}

\paragraph{Occupancy measures and the geometry of control.}
With known transitions, the occupancy measure turns policy optimization into linear optimization over a polytope. This correspondence descends from the linear programming view of dynamic programming \citep{Puterman_2014,Altman_2021}. Online and adversarial MDP algorithms exploit this view directly, through relative-entropy policy search and reductions to online convex optimization over the polytope \citep{EvenDar_Kakade_etal_2009,Neu_Antos_etal_2010,Zimin_Neu_2013,Rosenberg_Mansour_2019}. A parallel line studies the geometry that this representation imposes on policy space, through the value-function polytope \citep{Dadashi_Taiga_etal_2019}, the convex-analytic structure of constrained and inverse RL \citep{Schlaginhaufen_Kamgarpour_2023}, and the occupancy-manifold reading of actor-critic methods \citep{Milosevic_Scherf_2025,Muller_Montufar_2024}. The present geometry is built to measure how those exposed faces move and what their motion costs.

\paragraph{Dynamic regret and non-stationarity.}
Dynamic regret compares a learner against a moving comparator and bounds the gap by a measure of the comparator's motion, an idea that originates in online convex optimization \citep{Zinkevich_2003,Hall_Willett_2013,Jadbabaie_Rakhlin_etal_2015,Eshraghi_Liang_2022,Zhao_Zhang_etal_2020,Zhao_Zhang_etal_2024} and non-stationary stochastic optimization \citep{Besbes_Gur_etal_2015}. The idea has been adopted by non-stationary and adversarial MDPs \citep{Cheung_Simchi-Levi_etal_2020,Fei_Yang_etal_2020,Zhao_Li_etal_2022,Li_Zhao_etal_2023,Li_Zhao_etal_2024,Dick_Gyorgy_etal_2015,Cardoso_Wang_etal_2019}. These results bound regret by comparator path length or by reward and transition variation. The occupancy path-length bounds nearest to ours \citep{Zhao_Li_etal_2022,Li_Zhao_etal_2024} measure non-stationarity by the movement of the optimal occupancy and pay a Lipschitz factor to turn that movement into regret. The price studied here refines that movement to a path of exposed faces measured in value. For a learner holding the previous optimal face, this price equals the realized regret exactly (Theorem~\ref{thm:decomp}), whereas path length only bounds it.

\paragraph{Normal fans, polyhedral geometry, and policy updates.}
The normal fan of a polytope partitions objective space by the face a linear objective selects --- a standard construction in convex and polyhedral geometry \citep{Ziegler_1995,Rockafellar_1970,Lu_Robinson_2008,Paffenholz_2010,Tropp_2022}. We read its walls as the decision boundaries of control. A crossing becomes costly once the previous optimal face acquires positive Bellman advantage under the new loss. The margin around a wall is a weak-sharpness constant of the linear program \citep{Burke_Ferris_1993}. Separately, mirror-descent, natural-gradient, and trust-region methods place a non-Euclidean geometry on policy space \citep{Nemirovskii_IUdin_1983,Beck_Teboulle_2003,Kakade_2001,Kakade_Langford_2002,Schulman_Levine_etal_2017,Tomar_Shani_etal_2021,Lan_2022,Zhan_Cen_etal_2023,Alfano_Yuan_etal_2024,Agarwal_Kakade_etal_2020,Kleuker_Plaat_etal_2025}. Our reading recasts these updates as approximate trackers of the fan, with the Bregman term holding mass near uncertain walls, and a large advantage pushing it off a clearly suboptimal action. The prediction such tracking needs is supplied by the optimistic mechanism of \citet{Rakhlin_Sridharan_2014}.

\section{Occupancy geometry and the normal fan}
\label{sec:geometry}

\subsection{The occupancy polytope}

We consider an episodic MDP with horizon $H$, finite state space $\Sset$, finite action space $\Aset$, initial distribution $\rho$, and transition kernels $P_h(\cdot\mid s,a)$. We take the transitions known and fixed and let the losses $\ell_t=\{\ell_{t,h}(s,a)\}_{h,s,a}$ vary adversarially across episodes. Following the standard occupancy-measure formulation \citep{Puterman_2014,Altman_2021}, for a Markov policy $\pi$ the occupancy measure records the visitation frequencies,
\[
	q_h^\pi(s,a)=\Pr_\pi(s_h=s,\,a_h=a).
\]
Let $\Q$ denote the set of all occupancy measures attainable by some Markov policy. It is the polytope defined by nonnegativity together with the flow constraints
\begin{align*}
	\sum_a q_1(s,a)&=\rho(s),\\
	\sum_a q_{h+1}(s,a)&=\sum_{s',a'} q_h(s',a')\,P_h(s\mid s',a'),\qquad h=1,\ldots,H-1.
\end{align*}
The expected loss is linear in the occupancy, $J_t(\pi)=\ip{q^\pi}{\ell_t}$, and the set of optimal occupancies at round $t$ is the exposed face
\[
	F_t=\argmin_{q\in\Q}\ip{q}{\ell_t}.
\]
For a sequence of plays $x_t\in\Q$ the dynamic regret against the per-round optimum \citep{Zinkevich_2003,Hall_Willett_2013,Jadbabaie_Rakhlin_etal_2015} is
\[
	\DReg_T=\sum_{t=1}^T\left[\ip{x_t}{\ell_t}-\min_{q\in\Q}\ip{q}{\ell_t}\right].
\]

Notice that expected loss sees a policy only through its occupancy measure. Two policies with the same occupancy therefore collapse to one point of $\Q$, and flow conservation makes this quotient a bijection with $\Q$. Their natural distance is then the occupancy distance $d_{\rm occ}([\pi],[\pi'])=\norm{q^\pi-q^{\pi'}}$. Norm duality then equates it with the largest value gap a bounded adversary can open between them, $\sup_{\norm{\ell}_*\le1}|J_\ell(\pi)-J_\ell(\pi')|$.

\subsection{The normal fan}

Given the polytope and its faces, the next question is which losses select which face. To each face $F\subseteq\Q$ let's associate a cone of losses,
\[
	N(F)=\bigl\{\ell:\ F\subseteq \argmin_{q\in\Q}\ip{q}{\ell}\bigr\}.
\]
These cones collect into the (negative) normal fan of $\Q$, and the fan partitions loss space by the face that linear optimization selects \citep{Ziegler_1995,Rockafellar_1970,Lu_Robinson_2008}. For every loss $\ell$ the optimal set $F(\ell)=\argmin_{q\in\Q}\ip{q}{\ell}$ is an exposed face of $\Q$, attained where a supporting hyperplane meets it. A loss in the relative interior of $N(F)$ then selects exactly $F$.

\section{The face-crossing price}
\label{sec:priced}

\subsection{Definition and the regret decomposition}

Suppose the learner enters round $t$ playing some occupancy in the previous optimal face $F_{t-1}$. The unavoidable cost of remaining on that face is the smallest regret achievable by any of its points under the new loss.

\begin{definition}[Minimal face-crossing price]
	For consecutive optimal faces $F_{t-1}$ and $F_t$,
	\[
		\Gamma_t^{\mathrm{cross}}=\min_{u\in F_{t-1}}\left[\ip{u}{\ell_t}-\min_{q\in\Q}\ip{q}{\ell_t}\right],
		\qquad
		\Gamma_{2:T}^{\mathrm{cross}}=\sum_{t=2}^T\Gamma_t^{\mathrm{cross}}.
	\]
\end{definition}

The price is zero when some point of the old face is still optimal under the new loss, and positive only when every point of the old face has become suboptimal. Dynamic regret now decomposes around this quantity.

\begin{theorem}[Exact face-regret decomposition]
	\label{thm:decomp}
	Suppose that for $t\ge2$ the learner plays $x_t\in F_{t-1}$. Then the per-round regret splits as
	\[
		r_t:=\ip{x_t}{\ell_t}-\min_{q\in\Q}\ip{q}{\ell_t}=\Gamma_t^{\mathrm{cross}}+\varepsilon_t^{\rm sel},
		\qquad
		\varepsilon_t^{\rm sel}=\ip{x_t}{\ell_t}-\min_{u\in F_{t-1}}\ip{u}{\ell_t},
	\]
	and consequently
	\[
		\DReg_T=r_1+\sum_{t=2}^T\Gamma_t^{\mathrm{cross}}+\sum_{t=2}^T\varepsilon_t^{\rm sel}.
	\]
\end{theorem}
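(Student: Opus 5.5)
The identity is a telescoping add-and-subtract, so I would prove it directly from the definitions. Fix $t\ge2$ and write $m_t=\min_{q\in\Q}\ip{q}{\ell_t}$ and $g_t=\min_{u\in F_{t-1}}\ip{u}{\ell_t}$. Both minima are attained. $\Q$ is a nonempty compact polytope, so $m_t$ exists. $F_{t-1}=\argmin_{q\in\Q}\ip{q}{\ell_{t-1}}$ is an exposed face of $\Q$, hence a nonempty compact polytope as well, and a linear function attains its minimum on it. Both quantities are therefore finite real numbers.

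The key step is to insert $\pm g_t$ into the per-round regret:
\[
	r_t=\ip{x_t}{\ell_t}-m_t=\bigl(g_t-m_t\bigr)+\bigl(\ip{x_t}{\ell_t}-g_t\bigr).
\]
By the definition of the minimal face-crossing price, the first bracket is
\[
	\min_{u\in F_{t-1}}\bigl[\ip{u}{\ell_t}-m_t\bigr]=g_t-m_t=\Gamma_t^{\mathrm{cross}},
\]
where the constant $m_t$ can be pulled out of the minimum. The second bracket is $\varepsilon_t^{\rm sel}$ by definition, which gives the per-round split.

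I would also record the sign information, since it makes the decomposition meaningful. We have $\Gamma_t^{\mathrm{cross}}\ge0$ because $F_{t-1}\subseteq\Q$ forces $g_t\ge m_t$. We have $\varepsilon_t^{\rm sel}\ge0$ because $x_t\in F_{t-1}$ forces $\ip{x_t}{\ell_t}\ge g_t$. The hypothesis $x_t\in F_{t-1}$ is used exactly here: it is what makes $\varepsilon_t^{\rm sel}$ a genuine within-face selection error rather than a possibly negative residual. The identity itself holds for any $x_t$.

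Finally, summing over $t$ in $\DReg_T=\sum_{t=1}^T r_t$ and separating the $t=1$ term, for which no previous face exists, gives the displayed formula for $\DReg_T$. I expect no real obstacle. The only points needing care are nonemptiness and compactness of $F_{t-1}$, so that the minimum in the price is well defined, and keeping $r_1$ outside the decomposition.
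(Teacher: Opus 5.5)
Your proposal is correct and is the same add-and-subtract argument the paper uses: insert $\min_{u\in F_{t-1}}\ip{u}{\ell_t}$ into $r_t$, then sum over $t\ge2$. Your extra remarks on attainment of the minima and on signs are also accurate: the identity holds for any $x_t$, and $x_t\in F_{t-1}$ only guarantees $\varepsilon_t^{\rm sel}\ge0$. This matches the paper's own comment that the split stops separating cleanly once the learner leaves the old face.
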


\begin{proof}
	Add and subtract the best value attainable inside the old face. Writing $m_t=\min_{u\in F_{t-1}}\ip{u}{\ell_t}$,
	\[
		r_t=\Bigl[m_t-\min_{q\in\Q}\ip{q}{\ell_t}\Bigr]+\Bigl[\ip{x_t}{\ell_t}-m_t\Bigr]=\Gamma_t^{\mathrm{cross}}+\varepsilon_t^{\rm sel}.
	\]
	The first bracket is the least cost of remaining on $F_{t-1}$, and the second is the learner's error in choosing its representative within $F_{t-1}$. Summing over $t$ gives the claim.
\end{proof}

The price $\Gamma_t^{\mathrm{cross}}$ is intrinsic, and the loss sequence alone fixes it, beyond the reach of any learner that stays on the old face. By contrast, the selection error $\varepsilon_t^{\rm sel}$ is algorithmic, and it vanishes for a learner that always plays the best point of that face. This split presumes the learner holds the old face throughout the round. Once the learner moves off it, $\varepsilon_t^{\rm sel}$ can turn negative and the two terms no longer separate. 


\subsection{Comparison with loss variation}

The face-crossing price explains when the familiar variation bounds \citep{Besbes_Gur_etal_2015,Zhao_Zhang_etal_2024,Zhao_Li_etal_2022} are tight and when they are wasteful. Consider first the non-degenerate case in which each face is a single vertex.

Take every optimal face to be a single vertex, $F_t=\{v_t\}$, and let the tracker play $x_t=v_{t-1}$. The selection error of Theorem~\ref{thm:decomp} then vanishes, leaving $\DReg_T=r_1+\sum_{t=2}^T\ip{v_{t-1}-v_t}{\ell_t}$. Optimality of $v_{t-1}$ under $\ell_{t-1}$ bounds each term by $\norm{v_{t-1}-v_t}\,\norm{\ell_t-\ell_{t-1}}_*$. Summing recovers the classical variation estimate $\DReg_T\le r_1+\sum_{t=2}^T\norm{v_{t-1}-v_t}\,\norm{\ell_t-\ell_{t-1}}_*$. This estimate charges for every change in the loss, whereas $\Gamma_t^{\mathrm{cross}}$ charges only the part that displaces the optimal face. The next proposition shows that this gap grows without bound.

\begin{proposition}[Large variation, zero price]
	\label{prop:largevar}
	There exist loss sequences with arbitrarily large variation $V_T^\ell=\sum_t\norm{\ell_t-\ell_{t-1}}_*$ and yet $\Gamma_{2:T}^{\mathrm{cross}}=0$.
\end{proposition}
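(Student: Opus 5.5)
The plan is an explicit construction that keeps every loss inside the relative interior of a single normal cone. By the definition of the face-crossing price this forces $\Gamma_t^{\mathrm{cross}}=0$ at every round, while the magnitude of the loss is free to swing. Concretely, I fix any vertex $v$ of $\Q$ whose normal cone $N(\{v\})$ is full-dimensional. A deterministic policy in a nondegenerate MDP gives such a vertex; to avoid nondegeneracy issues I can also just take the simplest instance, $H=1$, one state, two actions, so that $\Q$ is the segment between $e_1$ and $e_2$.

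I then pick a base loss $\bar\ell$ in the interior of $N(\{v\})$, so that $F(\bar\ell)=\{v\}$, and define $\ell_t=c_t\bar\ell$ with scalars alternating between $c_t=1$ and $c_t=M$ for large $M>1$. A more invariant alternative is $\ell_t=\bar\ell+c_t\mathbf 1$, using that adding a constant to all losses within a fixed layer shifts $\ip{q}{\ell}$ by the same amount for every $q\in\Q$. This follows because $\sum_{s,a}q_h(s,a)=1$ for every $h$ by the flow constraints. Either choice keeps the argmin equal to $\{v\}$, because positive scaling preserves the argmin and layerwise constant shifts add the same constant to every value. Hence $F_t=\{v\}$ for all $t$.

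Next, for each $t\ge2$ the old face is $F_{t-1}=\{v\}$, and $v$ is also the minimizer under $\ell_t$. The inner minimum in the definition of $\Gamma_t^{\mathrm{cross}}$ is therefore attained at $u=v$ with value $\ip{v}{\ell_t}-\min_{q\in\Q}\ip{q}{\ell_t}=0$, and summing gives $\Gamma_{2:T}^{\mathrm{cross}}=0$. For the variation, each step changes the loss by $\norm{\ell_t-\ell_{t-1}}_*=(M-1)\norm{\bar\ell}_*$ in the scaling version, or $M\norm{\mathbf 1}_*$ in the shift version. So $V_T^\ell=(T-1)(M-1)\norm{\bar\ell}_*$, which is unbounded as $M\to\infty$, as $T\to\infty$, or both.

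The only step needing care is guaranteeing that a loss exists which selects a single vertex, so that $F_t$ is literally constant. The shift construction sidesteps this entirely: $F(\ell_t)=F(\bar\ell)$ holds for any $\bar\ell$, whatever face it selects, and then $F_{t-1}=F_t$ makes the price zero trivially. I would therefore present the shift version as the main argument and mention the scaling version as an illustration of motion inside one cone.
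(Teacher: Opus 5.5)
Your proposal is correct and is essentially the paper's proof, which also keeps every loss in one cone $N(F)$, by adding a common offset to a fixed loss or by moving inside the cone, so that $F_{t-1}=F_t$ and $\Gamma_t^{\mathrm{cross}}=0$ while the variation grows without bound; your flow-constraint check that $\sum_{s,a}q_h(s,a)=1$ (so a constant shift is value-neutral) makes explicit what the paper leaves implicit. One trivial slip: with $c_t$ alternating between $1$ and $M$, the shift-version step is $(M-1)\norm{\mathbf 1}_*$ rather than $M\norm{\mathbf 1}_*$, which does not affect the conclusion.
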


\begin{proof}
	Let every loss lie in the same cone $N(F)$, for instance by adding an arbitrary common offset to a fixed loss or by moving within the relative interior of the cone. The selected face is always $F$, so $F_{t-1}=F_t$ and $\Gamma_t^{\mathrm{cross}}=0$ throughout, while the variation grows without bound as the loss travels inside the cone.
\end{proof}

Whether a crossing is priced depends only on whether the old and new optimal faces still meet (Appendix Figure~\ref{fig:degenerate}). We analyse the complementary case, i.e., an unpredictable crossing that forces a fixed price on every learner with a matching minimax lower bound, in Appendix~\ref{app:lower}.

Loss variation cannot see where in the horizon a crossing falls. At fixed variation it can therefore hide a factor of $H$ in the price. More precisely:

\begin{theorem}[Loss variation cannot see the crossing layer]
	\label{thm:varsep}
	For every horizon $H\ge2$ there exist two finite-horizon MDPs with fixed transitions, each carrying a two-round loss sequence whose round-to-round change touches a single coordinate of the same magnitude. The loss variation $V_T^\ell$ is then identical across the two instances under every $L_p$ norm, while their face-crossing prices differ by the factor $H-1$. Both old faces are vertices, so the selection error of Theorem~\ref{thm:decomp} vanishes and the previous-face tracker's dynamic regret equals the price on each. No bound of the form $\DReg_T\le B(V_T^\ell)$ can then be tight on both, and one of the two is loose by a factor $\Theta(H)$.
\end{theorem}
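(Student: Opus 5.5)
The plan is to build both instances explicitly. The old face will be a vertex whose unique optimal action at some layer is flipped by a single one-coordinate loss change. The point of the construction is that this flip costs a constant in one instance and $H-1$ times that constant in the other. Each instance has transitions chosen so that the flipped decision either governs one step of loss or commits the agent to a trajectory carrying $H-1$ steps of loss.

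\emph{Instance A (cheap crossing).} There is a single state at every layer and two actions $a,b$. Round $1$ has $\ell_{1,h}(s,a)=0$ and $\ell_{1,h}(s,b)=\delta$ at every layer, so the unique optimum is ``always $a$''. Round $2$ raises only the coordinate $\ell_{2,H}(s,a)$ to $2\delta$. The old vertex $v_1$ (always $a$) now pays $2\delta$ at layer $H$ against the optimum's $\delta$. Since $F_1=\{v_1\}$, we get $\Gamma_2^{\rm cross}=\delta$.

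\emph{Instance B (expensive crossing).} At layer $1$ the action determines which of two absorbing chains $s^a,s^b$ is entered for layers $2,\dots,H$; chain $s^x$ has one action with per-step loss $c_x$. Layer-$1$ losses are $\ell_{1,1}(s_0,a)=0$ and $\ell_{1,1}(s_0,b)=\delta/2$, with $c_a=c_b=0$. Round $2$ changes only $\ell_{2,1}(s_0,a)$ from $0$ to $\delta$ and keeps the chain losses fixed, so the price is only $\delta/2$. That is not $H$-scale, so I adjust: put the round-$1$ tie-breaker on the chain losses instead. Take $c_a=0$ and $c_b=\epsilon$ for the round-$1$ preference. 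Round $2$ must then change a single coordinate yet add $\delta$ per step along chain $a$. To achieve this, I will make chain $a$ a single self-looping state $s^a$ whose per-step loss coordinate is shared across layers. That is only possible if losses are layer-independent, so instead I use a stationary-loss parametrization via one state-action appearing at each layer with a common coordinate. If the paper's coordinates are layer-indexed this fails, and the fallback is to compare on the price scale: set the instance-A change magnitude to $\delta$ and the instance-B change to $\delta$ at layer $1$ in an MDP where the round-$1$ margin is tiny. Then $\Gamma=\delta$ versus a value difference propagated through $H-1$ layers.

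The more robust route, and the one I will commit to, is to keep the single changed coordinate at the first layer in both instances and let the transitions carry the consequence. In B, action $a$ at layer $1$ leads to a state from which all actions cost $\delta$ per step under $\ell_2$. This requires $\ell_2$ to differ on many coordinates, so the cost on the downstream chain must be present in \emph{both} rounds, with round $1$ offsetting it. Concretely, set $\ell_{t,1}(s_0,a)=-(H-1)\delta+\theta_t$ and put cost $\delta$ per step on chain $a$, with chain $b$ free. With $\theta_1<0$ and $\theta_2>0$ both of magnitude about $\delta/2\cdot$const, the single-coordinate change is $\theta_2-\theta_1$, and the price is $\theta_2$. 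This still yields no $H$ gap, so the factor must come from normalizing variation per coordinate while A's margin scales as $\delta/(H-1)$. I therefore expect the \textbf{main obstacle} to be arranging equal one-coordinate variation while the prices differ by $H-1$. The resolution I will try is to make A's change hit a coordinate with occupancy $1/(H-1)$ (reached with small probability via $P$), and B's hit a coordinate with occupancy $1$. Then $\Gamma_A=\delta/(H-1)$ and $\Gamma_B=\delta$ at identical $V^\ell_T$ under every $L_p$ norm, since only one coordinate moves by $\delta$.

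Finally, both old faces are singletons by strict round-$1$ margins, so Theorem~\ref{thm:decomp} gives $\varepsilon^{\rm sel}_2=0$ and $\DReg=r_1+\Gamma_2^{\rm cross}$ with $r_1=0$ for the tracker started at $v_1$. Any $B$ with $B(V)\ge\Gamma_B$ then exceeds $\Gamma_A$ by the factor $H-1$, which proves the claimed $\Theta(H)$ looseness.
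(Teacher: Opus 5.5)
Your committed construction, once you discard the exploratory attempts, is valid and takes a genuinely different route from the paper's. It needs two repairs.

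\textbf{Repair 1: drop the opening Instance A.} Suppose a single coordinate $c$ changes by $\Delta$, $v$ is the old vertex, and $q^*$ is a new optimum. Then
$\Gamma_2^{\mathrm{cross}}=\ip{v-q^*}{\ell_1}+\ip{v-q^*}{\ell_2-\ell_1}\le(v_c-q^*_c)\Delta\le|\Delta|$,
because occupancies lie in $[0,1]$. So no instance with a change of $2\delta$ can cost more than $2\delta$, and Instance A cannot be paired with anything once $H>3$. The same bound explains why every deterministic B-attempt stalled. The expensive instance can at best convert the whole change into price, so the factor must come from making the cheap instance's price a small fraction of $\Delta$.

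\textbf{Repair 2: the prices are not $\delta/(H-1)$ and $\delta$.} The old faces are vertices only under strict round-1 margins. With a tie, the old face already contains the new optimum and the price is $0$. So write the instances out explicitly:
\begin{itemize}
\item In A, a chance node sends the agent with probability $p=1/(H-1)$ to a decision state whose two actions cost $(0,\epsilon)$ in round 1 and $(\delta,\epsilon)$ in round 2. Otherwise it goes to a dummy branch. Pad to horizon $H$.
\item In B, the same decision state is reached with probability $1$.
\end{itemize}
For $0<\epsilon<\delta$ this gives $\Gamma_A=p(\delta-\epsilon)$ and $\Gamma_B=\delta-\epsilon$. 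The one-hot change has norm $\delta$ in both instances, and the ratio is exactly $H-1$.

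\textbf{Comparison with the paper.} The paper keeps transitions deterministic and fixes the change at $R=\gamma(H-1)$ in both instances. It controls how much of that change becomes price through the round-1 margin:
\begin{itemize}
\item In the late chain, a margin of $R-\gamma$ absorbs almost all of the change, and the old path is beaten by only $\gamma$ at the last layer.
\item In the early tree, the margin is $\kappa$ and the vanishing toll reroutes the whole horizon, so the old path pays $\gamma$ on each of $H-1$ layers.
\end{itemize}
Your dilution-by-visitation route has several advantages. It is easier to verify and gives the ratio exactly rather than in the limit $\kappa\to0$. It shows that any factor is attainable. It also has no edge case at $H=2$, where the paper's late instance has $R-\gamma=0$ and hence a round-1 tie.

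What your route gives up is the interpretation in the theorem's title. One decision step plus a chance node suffices, so the horizon and the crossing layer play no role. The paper's expensive instance, by contrast, accumulates its price over the downstream layers exactly as in Corollary~\ref{cor:causal}.
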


The transition structure drives the amplification, nudging a single step in the cheap instance while rerouting the entire horizon in the expensive one. The variation itself stays put, untouched by the transitions on either side. Within a single fixed MDP a one-coordinate change cannot open a factor-$H$ gap. Two distinct MDPs are therefore necessary, and Appendix~\ref{app:lower} gives the construction and proof. The separation also speaks only to loss or reward variation. Comparator path length, by contrast, does register the crossing-layer difference.

\subsection{Choosing a representative inside a degenerate face}

The comparison so far has assumed each optimal face is a single vertex. When $F_{t-1}$ is instead a higher-dimensional face, the learner must still choose one occupancy inside it, and that choice is the source of the selection error $\varepsilon_t^{\rm sel}$. A learner that predicts the coming loss can keep this error small.

Predicting the coming loss by $m_t$ and selecting $x_t\in\argmin_{u\in F_{t-1}}\ip{u}{m_t}$ keeps the error at $\varepsilon_t^{\rm sel}\le\diam(F_{t-1})\,\delta_t$, with $\delta_t=\norm{m_t-\ell_t}_*$. A sticky Bregman selector trades this against information held along the directions the old face leaves undetermined, at an added cost $\lambda R_t^\psi$ (Propositions~\ref{prop:predsel}--\ref{prop:bregsel}, Appendix~\ref{app:proofs}). This choice of representative is exactly where the Bregman geometry of mirror descent \citep{Nemirovskii_IUdin_1983,Beck_Teboulle_2003} enters. We discuss this further in Section~\ref{sec:algorithms}.

\section{Bellman advantage as the price of motion}
\label{sec:bellman}

So far the face-crossing price has been a polyhedral quantity, defined by minimization over a face of a complicated polytope. In an MDP it also admits a dynamic-programming form \citep{Puterman_2014}. To exhibit it, fix a loss $\ell_t$ and define the optimal value and action-value functions by the backups
\begin{align*}
	V_{t,H+1}^*(s)&=0,\\
	Q_{t,h}^*(s,a)&=\ell_{t,h}(s,a)+\E_{s'\sim P_h(\cdot\mid s,a)}V_{t,h+1}^*(s'),\\
	V_{t,h}^*(s)&=\min_a Q_{t,h}^*(s,a),
\end{align*}
together with the loss advantage $A_{t,h}^*(s,a)=Q_{t,h}^*(s,a)-V_{t,h}^*(s)\ge 0$. The advantage measures how much worse action $a$ is at step $h$ in state $s$ than acting optimally onward, and it is nonnegative under the minimization convention. These two expressions for the price now connect through a single identity.

\begin{theorem}[Bellman advantage representation]
	\label{thm:bellman}
	For any Markov policy $\pi$,
	\[
		J_t(\pi)-J_t^*=\sum_{h=1}^H\E_\pi\bigl[A_{t,h}^*(s_h,a_h)\bigr],
	\]
	and consequently the face-crossing price is the minimal accumulated advantage over the policies that were optimal a round earlier,
	\[
		\Gamma_t^{\mathrm{cross}}=\min_{\pi\in\Pi_{t-1}^*}\sum_{h=1}^H\E_\pi\bigl[A_{t,h}^*(s_h,a_h)\bigr].
	\]
\end{theorem}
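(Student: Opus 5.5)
The first identity is the finite-horizon performance-difference lemma taken against the optimal value function of $\ell_t$, and we prove it by telescoping. Here $J_t^*=\min_{q\in\Q}\ip{q}{\ell_t}=\E_{s_1\sim\rho}V_{t,1}^*(s_1)$, which is standard backward induction for the backups above. Fix a Markov policy $\pi$ and let $(s_h,a_h)_{h\le H}$ be its trajectory. For each $h$ we write
\[
	\ell_{t,h}(s_h,a_h)=Q_{t,h}^*(s_h,a_h)-\E\bigl[V_{t,h+1}^*(s_{h+1})\mid s_h,a_h\bigr]
	=A_{t,h}^*(s_h,a_h)+V_{t,h}^*(s_h)-\E\bigl[V_{t,h+1}^*(s_{h+1})\mid s_h,a_h\bigr].
\]
Taking $\E_\pi$ and using the tower property, which is valid because under $\pi$ the state $s_{h+1}\sim P_h(\cdot\mid s_h,a_h)$, the value terms telescope. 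Since $V_{t,H+1}^*=0$, summing over $h$ gives $J_t(\pi)=\sum_h\E_\pi[A_{t,h}^*(s_h,a_h)]+\E_{\rho}V_{t,1}^*(s_1)$, which is the first claim. Equivalently, in occupancy form this reads $\ip{q^\pi}{\ell_t}-J_t^*=\sum_h\ip{q_h^\pi}{A_{t,h}^*}$. Flow conservation is exactly what makes $\sum_h\ip{q_h^\pi}{V_{t,h}^*-P_hV_{t,h+1}^*}=\ip{\rho}{V_{t,1}^*}$, so the argument is pure LP duality and needs no trajectory language.

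For the second claim we let $\Pi_{t-1}^*=\{\pi\text{ Markov}: q^\pi\in F_{t-1}\}$, the policies optimal for $\ell_{t-1}$. The definition of $\Gamma_t^{\mathrm{cross}}$ is a minimum over $u\in F_{t-1}\subseteq\Q$. Every $u\in\Q$ is $q^\pi$ for some Markov policy $\pi$, since by the characterisation of $\Q$ as the set of attainable occupancies we can take $\pi_h(a\mid s)=u_h(s,a)/\sum_{a'}u_h(s,a')$, with arbitrary $\pi_h$ where the denominator vanishes. Hence the map $\pi\mapsto q^\pi$ sends $\Pi_{t-1}^*$ onto $F_{t-1}$. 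Because $J_t(\pi)$ depends on $\pi$ only through $q^\pi$, the two minimisations have the same value. We then substitute the identity from the first part into $\ip{u}{\ell_t}-J_t^*$. The minimum is attained because $F_{t-1}$ is a nonempty compact face.

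The only delicate point is the surjectivity step, namely that $\Q$, defined by the flow constraints, coincides with the occupancies of Markov policies. The paper takes this as the definition of $\Q$, so we only need to note that the policy constructed from $u$ reproduces $u$, which follows by induction on $h$ using the flow constraints. The remaining steps are bookkeeping.
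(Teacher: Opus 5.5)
Your proposal is correct and follows the paper's own argument: you telescope the Bellman identity under $\pi$ to get the performance-difference form, then move the minimum over $F_{t-1}$ to the policies $\Pi_{t-1}^*$ whose occupancies are exactly the points of that face. Beyond this, you make explicit three steps the paper leaves implicit: that $J_t^*=\E_{\rho}V_{t,1}^*(s_1)$, that $\pi\mapsto q^\pi$ maps $\Pi_{t-1}^*$ onto $F_{t-1}$ via the normalized-occupancy policy, and that the minimum is attained.
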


\begin{proof}
	The Bellman identity $Q_{t,h}^*(s_h,a_h)=\ell_{t,h}(s_h,a_h)+\E[V_{t,h+1}^*(s_{h+1})\mid s_h,a_h]$ gives, after taking expectations under $\pi$ and summing over the horizon, a telescoping of the value terms,
	\[
		\sum_{h=1}^H\E_\pi\bigl[A_{t,h}^*(s_h,a_h)\bigr]
		=\sum_{h=1}^H\E_\pi\bigl[Q_{t,h}^*(s_h,a_h)-V_{t,h}^*(s_h)\bigr]
		=J_t(\pi)-\E_{s_1\sim\rho}V_{t,1}^*(s_1)
		=J_t(\pi)-J_t^*.
	\]
	Minimizing the left side over the policies optimal at round $t-1$, whose occupancies are exactly the points of $F_{t-1}$, returns the face-crossing price.
\end{proof}

The first identity is the performance-difference lemma \citep{Kakade_Langford_2002} taken against the optimal policy, which attains $Q^*$ and so carries the optimal advantage. The second equality is our contribution. It identifies the polyhedral face-crossing price with this restricted minimization and prices a crossing in the normal fan without ever building it. A single value backup produces the advantages $A_t^*$ at once. The remaining minimization runs over $\Pi_{t-1}^*$, the policies optimal a round earlier. In a finite-horizon MDP these are read off directly, the policies supported on the previous loss's zero-advantage actions, with no search required. The minimization collapses to one trajectory expectation when that earlier optimum is unique. Figure~\ref{fig:bellman-causal} (left) shows the price assembled from per-step advantages along the policies of the previous optimal face.

\paragraph{Causal anisotropy.}
The representation weights a crossing by the number of steps its error affects. A wrong action only at the final step pays $\gamma$ once, affecting nothing downstream. Meanwhile, a wrong first action that then errs by margin $\gamma$ at all $H$ steps pays $H\gamma$. A deterministic layered MDP turns this into a closed form.

\begin{corollary}[Causal anisotropy]
	\label{cor:causal}
	In a deterministic layered MDP, a crossing forced at layer $h$ with per-step margin $\gamma$ has price $\Gamma_t^{\mathrm{cross}}=(H-h+1)\gamma$, so $\Gamma_t^{\mathrm{cross}}/\gamma=H-h+1$.
\end{corollary}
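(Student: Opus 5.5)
The plan is to specialize Theorem~\ref{thm:bellman} to a deterministic layered MDP. There a Markov policy that is deterministic follows a single trajectory, so the accumulated advantage becomes a plain sum of per-step advantages along one path. First I will fix what ``a crossing forced at layer $h$ with per-step margin $\gamma$'' means, and I will choose the reading that makes the claim a computation. Under $\ell_{t-1}$ the previous optimal face $F_{t-1}$ is the single vertex $v_{t-1}$, the occupancy of a deterministic path $(s_1,a_1,\dots,s_H,a_H)$. Under $\ell_t$ this path stays optimal through layer $h-1$. At layer $h$ it takes an action that has become suboptimal, which moves it into a region (a chain of states) where the new loss charges excess $\gamma$ at each of the remaining layers $h,\dots,H$. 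The new optimum instead deviates at layer $h$ and pays zero advantage afterwards. This is the same layered construction used in Theorem~\ref{thm:varsep}, and I would state the corollary as referring to it.

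Next comes the main computation. Compute the optimal backup $V^*_{t,\cdot}$, $Q^*_{t,\cdot}$ under $\ell_t$ by backward induction from layer $H$. The goal is to show that along the old path the advantages are $A^*_{t,k}(s_k,a_k)=0$ for $k<h$ and $A^*_{t,k}(s_k,a_k)=\gamma$ for $k\ge h$. There is one subtlety to handle. The old path's advantage at layer $h$ is $Q^*_{t,h}(s_h,a_h)-V^*_{t,h}(s_h)$. This includes the downstream optimal values, so it is not just the immediate gap. I would therefore design the branch so that after the wrong action at layer $h$ the learner is trapped in a chain. In that chain every available action costs $\gamma$ more than its counterpart on the good chain, and within the chain the old path's actions are optimal. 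Then $V^*$ on the trapped chain at layer $k$ equals the good-chain value plus $(H-k)\gamma$. The layer-$h$ advantage is then exactly $\gamma$, and the layers $k>h$ each contribute a further $\gamma$ of loss relative to the good chain, with zero advantage inside the chain. Equivalently, it is cleaner to bypass the per-step bookkeeping and use the first identity of Theorem~\ref{thm:bellman} directly: $J_t(\pi_{t-1})-J_t^*=\sum_{k=h}^H\gamma=(H-h+1)\gamma$. Either way, summing gives the total.

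Finally, since $F_{t-1}$ is a vertex, $\Pi^*_{t-1}$ induces a single occupancy. The minimum in Theorem~\ref{thm:bellman}, or directly in the definition of $\Gamma^{\mathrm{cross}}_t$, is therefore attained at $v_{t-1}$, and $\Gamma^{\mathrm{cross}}_t=(H-h+1)\gamma$. Dividing by $\gamma$ gives the ratio. I expect the main obstacle to be the modeling step. I must pin down the construction so that ``per-step margin $\gamma$'' really means the old path incurs excess exactly $\gamma$ at each of the $H-h+1$ affected layers. I must also ensure that no other point of $F_{t-1}$ (ruled out by uniqueness of the vertex) and no cheaper recovery path after layer $h$ exists. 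I would enforce the latter by making the trapped chain absorbing, so that the wrong action at layer $h$ commits the learner for the rest of the horizon.
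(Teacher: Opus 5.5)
Your direct route is correct and is essentially the paper's own argument. The paper treats the corollary as an immediate consequence of Theorem~\ref{thm:bellman}, in a deterministic tree where the unique old optimal path and the new optimum share a prefix and differ on the last $H-h+1$ layers, each carrying margin $\gamma$. Since $F_{t-1}=\{v_{t-1}\}$, you have $\Gamma_t^{\mathrm{cross}}=\ip{v_{t-1}}{\ell_t}-J_t^*$, and comparing the two paths' losses layer by layer gives $(H-h+1)\gamma$. That step is just the definition of the price plus a direct loss comparison; it does not need the advantage identity.

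Your per-step bookkeeping, however, is wrong for the absorbing trapped chain you build, so drop the ``either way''.
\begin{itemize}
\item With the paper's convention that $V^*_{t,k}$ includes the layer-$k$ loss, the trapped-chain offset at layer $k$ is $(H-k+1)\gamma$, not $(H-k)\gamma$.
\item $Q^*_{t,h}(s_h,a_h)$ therefore carries the whole downstream excess. Along the old path, $A^*_{t,h}(s_h,a_h)=\gamma+(H-h)\gamma=(H-h+1)\gamma$.
\item $A^*_{t,k}(s_k,a_k)=0$ for every $k\neq h$: before $h$ the old actions remain optimal, and inside the chain they are optimal by design.
\end{itemize}
The per-layer $\gamma$ is the one-step loss gap, not the optimal advantage. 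Your stated goal ($A^*_{t,k}=\gamma$ for all $k\ge h$) and your claim that the layer-$h$ advantage is exactly $\gamma$ are both false. Paired with zero advantage inside the chain, they would sum to $\gamma$, not $(H-h+1)\gamma$. An advantage profile of $\gamma$ at each layer $k\ge h$ arises instead in a separable chain, where each wrong action leaves the next state unchanged.

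Finally, do not tie the corollary to the construction of Theorem~\ref{thm:varsep}. Its early instance has price $\gamma(H-1)-\kappa$, because the layer where its paths diverge carries only the bonus $\kappa$ rather than margin $\gamma$. It is therefore not an instance of $(H-h+1)\gamma$. Your own trapped-chain tree, or the binary tree of Section~\ref{sec:experiments}, is the right instance.
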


The right panel of Figure~\ref{fig:bellman-causal} illustrates this in a layered MDP.

\begin{figure}[t]
	\centering
	\begin{minipage}[c]{0.53\linewidth}\centering
		\includegraphics[width=\linewidth]{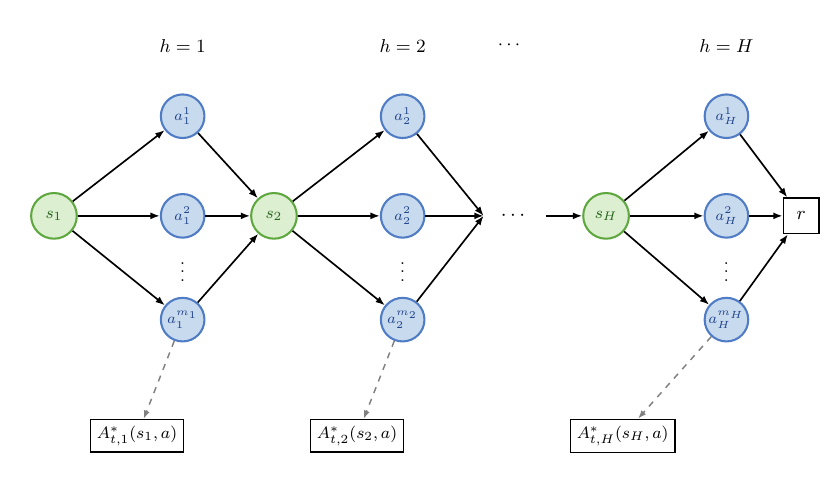}
	\end{minipage}\hfill
	\begin{minipage}[c]{0.41\linewidth}\centering
		\includegraphics[width=\linewidth]{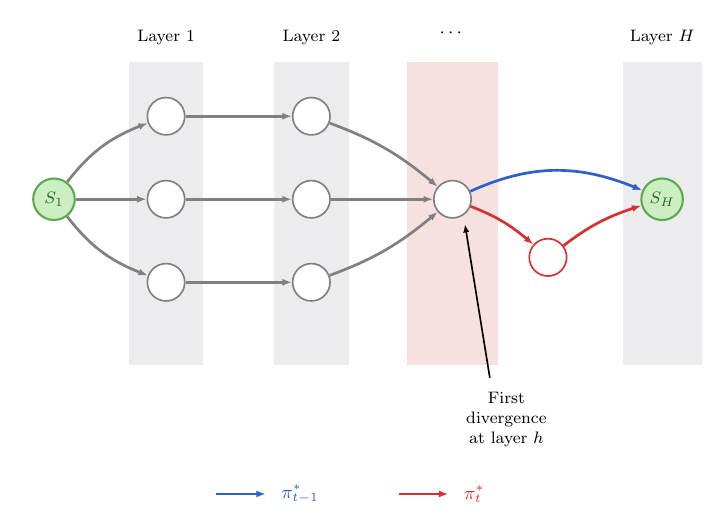}
	\end{minipage}
	\caption{The Bellman price and its causal consequence. Left, under a fixed new loss the optimal advantage $A_{t,h}^*(s,a)$ is computed by a value backup, and the face-crossing price is the smallest expected advantage accumulated along a policy that was optimal under the previous loss (Theorem~\ref{thm:bellman}). Right, in a layered MDP a divergence between the old and new optimal policies at an early layer redirects all downstream visitation, so the price grows with the number of affected steps.}
	\label{fig:bellman-causal}
\end{figure}

\section{Detecting cones without computing the fan}
\label{sec:algorithms}
Pricing a crossing assumes that the crossing has already happened. A learner must act before one arrives, so it has to locate itself in the fan. That fan is far too large to enumerate beyond small examples. The optimal advantages under the current loss already give that location. Each one, produced by the backup of Section~\ref{sec:bellman}, measures how far an action sits from a wall of the current optimal cone. A learner thus reads its position in the fan from a quantity it has already computed.

\subsection{The advantage as a cone detector}

Under full information a prediction error matters only when it can carry the learner across a wall. The relevant scale is the cone margin $\Delta_t$, the smallest suboptimality among vertices outside $F_t$ and a weak-sharpness constant of the linear program \citep{Burke_Ferris_1993}. Whenever the model's uniform value error $\varepsilon_t$ stays below $\Delta_t/2$, the round contributes no regret (Proposition~\ref{prop:conedetect}, Appendix~\ref{app:proofs}). A learner thus pays for its prediction error only on the rounds whose margin is thin.

\subsection{Advantage-thresholded mirror descent}

The same principle suggests how an update rule should move mass. At round $t$ the learner forms a model $m_t$ of the loss, computing predicted advantages $\widehat A_{t,h}(s,a)$ with confidence radii $\beta_{t,h}(s,a)$. The model marks the actions that are statistically indistinguishable from optimal,
\[
	\Aset_{t,h}^{\rm near}(s)=\bigl\{a:\ \widehat A_{t,h}(s,a)\le c\,\beta_{t,h}(s,a)\bigr\}.
\]
It then takes a mirror-descent step that is reluctant to abandon the near set but free to leave the rest,
\[
	\pi_{t+1}=\argmin_{\pi}\Bigl\{\eta_t\,\widehat J_t(\pi)+D_\psi(\pi,\pi_t)+\lambda_t\sum_{h,s}\sum_{a\notin\Aset_{t,h}^{\rm near}(s)}\pi_h(a\mid s)\Bigr\}.
\]
We intend this as an interpretation of existing methods rather than a new algorithm, and keep the account informal. The Bregman term $D_\psi(\pi,\pi_t)$ holds probability mass near the uncertain walls, as the Bregman selector of Proposition~\ref{prop:bregsel} does. The penalty then moves mass off actions whose advantage is confidently positive. Read this way, natural-gradient, trust-region, and policy-mirror-descent updates \citep{Kakade_2001,Schulman_Levine_etal_2017,Tomar_Shani_etal_2021,Lan_2022} should track the fan well, holding mass at thin-margin walls and leaving high-advantage actions quickly. We build the model $m_t$ optimistically from past losses \citep{Rakhlin_Sridharan_2014}, supplying the prediction that Proposition~\ref{prop:predsel} needs. Appendix Figure~\ref{fig:advantage-gate} illustrates the resulting threshold.

Proposition~\ref{prop:gate}, proved in Appendix~\ref{app:gate}, bounds the tracker's regret on the rounds the prediction resolves, and pins down what the penalty alone changes. On every round whose margin the prediction resolves, with $2\varepsilon_t<\Delta_t$, the tracker adds nothing to the intrinsic price beyond the controllable selection term. On a single state-step with a uniform per-layer margin, a fixed step size lets the penalty suppress a confidently suboptimal action at layer $h$ faster than a plain mirror step, by the causal factor $1+\lambda(H-h+1)/\beta$ of Corollary~\ref{cor:causal}. An adversary sitting on a wall instead drives $\Delta_t\to0$ and forces the $\Omega(\gamma T)$ of Proposition~\ref{prop:lower}. This uniform-margin reading therefore holds only for benign non-stationarity.

\section{Experiments}
\label{sec:experiments}

Three constructed environments verify the paper's three central predictions. For the experiments, we keep the normal fan small, so the face price $\Gamma_t^{\mathrm{cross}}$, the crossing layer, and the selection error all take closed forms that we hold against the measured regret. All numbers below average over $32$ seeds.

\paragraph{Loss variation is not the difficulty (simplex).}
The previous-face tracker's regret is a linear function of the face price $\Gamma_{2:T}^{\mathrm{cross}}$, at sample correlation $1.00$ against only $0.89$ for the loss variation $V_T^\ell$ (Appendix Figure~\ref{fig:exp-simplex}). Theorem~\ref{thm:decomp} predicts this exact linearity for a tracker that holds a vertex face. The gap shows most sharply in the within-cone regime. There the loss travels far, at mean variation $2.6\times10^3$, while the optimal face never moves. The price $\Gamma_{2:T}^{\mathrm{cross}}$ stays at zero, and the tracker pays almost no regret. The environment is a one-state problem with $K$ actions, which makes $\Q$ the simplex, and we drive the loss through four regimes. The clean relationship belongs to a learner that holds the face. Taken over all four algorithms at once, both correlations fall near $0.46$, the geometry-blind methods contributing regret tied to neither quantity.

\paragraph{Early crossings cost more (layered tree).}
The measured ratio $\Gamma_t^{\mathrm{cross}}/\gamma$ lands on the predicted identity $H-h+1$ of Corollary~\ref{cor:causal} to within floating-point error. Across horizons up to $20$ the largest deviation is $2\times10^{-15}$. The tree therefore confirms the closed-form identity rather than estimating it (Figure~\ref{fig:exp-ab}, left). An early crossing is therefore $H$ times as expensive as a late one. The instance is a deterministic binary tree, each policy a root-to-leaf path, and we force a crossing at a chosen layer $h$.

\paragraph{Regret splits into price and selection error (degenerate faces).}
The benchmark makes the optimal set a prefix face. Every selector inside it then pays the same intrinsic price, and only the representative differs. We compare four selectors --- an oracle that sees $\ell_t$ and plays the best point of $F_{t-1}$, a sticky Bregman selector that stays near its previous choice (Proposition~\ref{prop:bregsel}), and lexicographic and random tie-breaks (defined in Appendix~\ref{app:experiments}). All four pay the identical price of Theorem~\ref{thm:decomp}, and the regret above it is pure selection error. The oracle adds none, while the sticky selector adds per-round error $0.062$, lexicographic adds $0.109$, and random adds $0.110$ (Figure~\ref{fig:exp-ab}, right). The cumulative-regret stack reproduces the decomposition directly.

\begin{figure}[t]
	\centering
	\begin{minipage}[t]{0.46\linewidth}\centering
		\includegraphics[width=\linewidth]{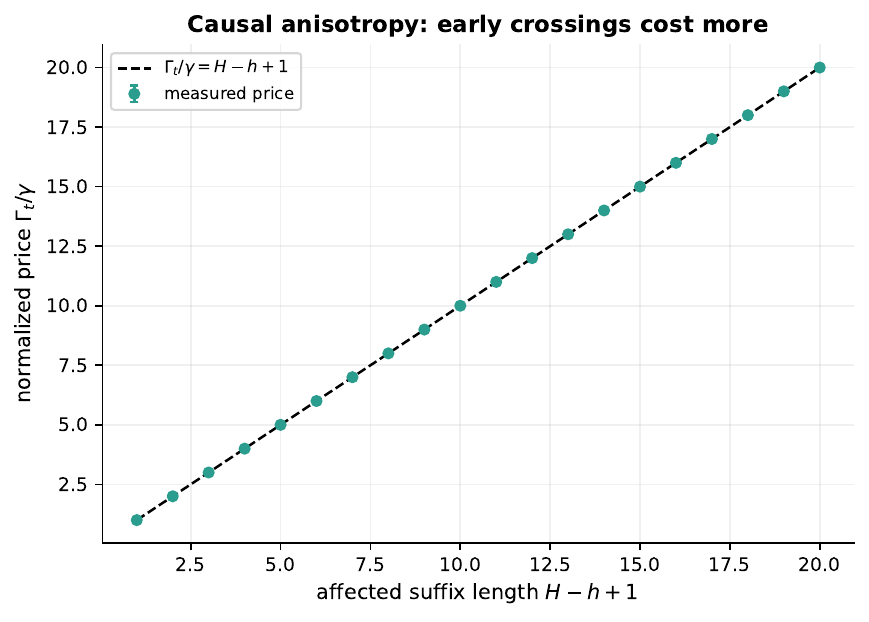}
	\end{minipage}\hfill
	\begin{minipage}[t]{0.52\linewidth}\centering
		\includegraphics[width=\linewidth]{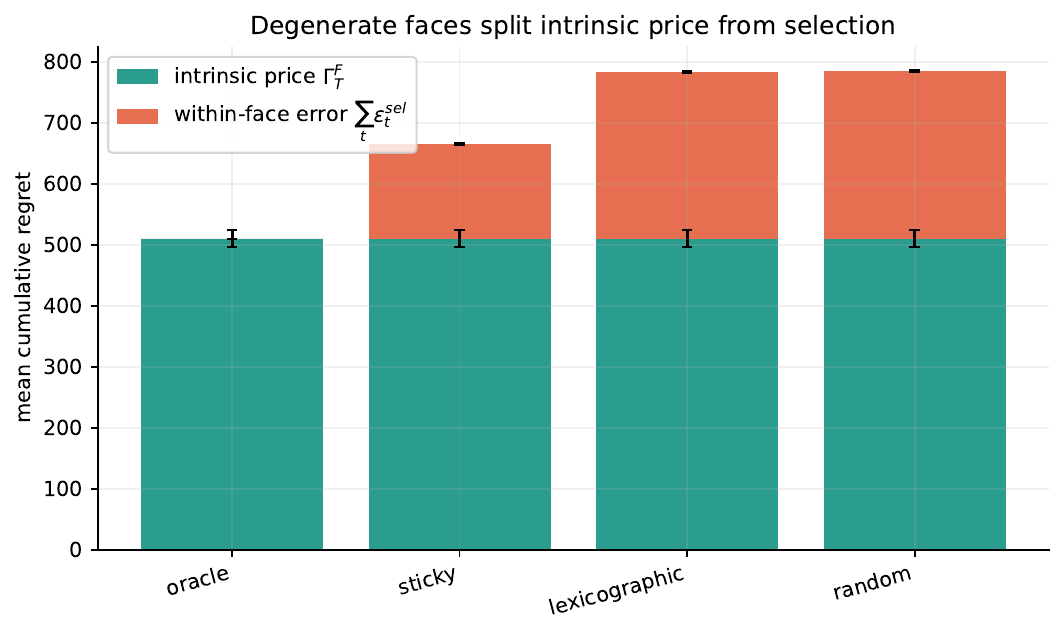}
	\end{minipage}
	\caption{Left, the causal-anisotropy identity in the layered tree, where the measured $\Gamma_t^{\mathrm{cross}}/\gamma$ against $H-h+1$ falls on the identity line to within $2\times10^{-15}$. Right, the regret decomposition on degenerate faces, where the intrinsic price is the same for every selector and the regret above it is selection error, which the oracle drives to zero and the sticky selector keeps below the lexicographic and random ones. Error bars show standard errors.}
	\label{fig:exp-ab}
	\vspace{-1em}
\end{figure}

\paragraph{Heuristic update (secondary).}
As a consistency check on Section~\ref{sec:algorithms}, we run the advantage-thresholded update --- the penalized mirror-descent step defined there --- on custom non-stationary gridworlds, following the standard tabular gridworld convention in reinforcement learning \citep{Sutton_Barto_2018}. The baselines are mirror descent, optimistic mirror descent, and a trust-region update. The thresholded update attains the lowest cumulative regret in every scenario, beating the mirror-descent baselines by factors of three to four and the best-tuned trust-region method (Appendix Figure~\ref{fig:exp-grid}). Proposition~\ref{prop:gate} attributes the gain to the causal-anisotropy factor $H-h+1$ of Corollary~\ref{cor:causal}. Unlike the three structural results, this gain has no closed-form tie to $\Gamma_{2:T}^{\mathrm{cross}}$.
\vspace{-1em}

\section{Discussion}
\label{sec:discussion}

This paper recasts the cost of non-stationarity in fixed-transition adversarial MDPs as the priced motion of the optimal face through the normal fan. That quantity is at once geometrically intrinsic and computable by a single value backup. The face-crossing price improves on the usual measures of non-stationarity in two concrete ways. It counts only the motion that forces a change of policy. The price stays at zero even as loss variation or comparator path length grows without bound. It is also exact where the usual measures only bound. Theorem~\ref{thm:bellman} writes it as an expected optimal Bellman advantage over the already-known old face. Variation and path length reach regret only through a Lipschitz constant. The decomposition $\DReg=\text{priced face motion}+\text{within-face selection error}$ holds whenever the learner stays on the old face, and it cleanly separates the intrinsic price from the choice of representative inside a degenerate face. Its sharpest consequence is the causal anisotropy $\Gamma_t^{\mathrm{cross}}/\gamma=H-h+1$ of Corollary~\ref{cor:causal}. The price depends on where in the horizon a crossing falls, a dependence loss variation cannot express and the experiments confirm to floating-point precision. For the previous-face tracker the price is two-sided, the decomposition giving an exact upper bound and the $\Omega(\gamma T)$ lower bound of Proposition~\ref{prop:lower} matching it when crossings are unpredictable. Under degeneracy the selection error is an independent degree of freedom (Proposition~\ref{prop:degsel}).

\paragraph{Limitations.}
The structural results assume fixed transitions and a finite horizon. We leave the moving-transition case, studied in non-stationary MDP work through transition variation \citep{Cheung_Simchi-Levi_etal_2020,Fei_Yang_etal_2020,Mao_Zhang_etal_2022,Wei_Luo_2021}, to future work. In such cases, the polytope itself deforms between rounds. Additionally, our reading of mirror-descent and trust-region updates as fan trackers remains heuristic, with no adversarial regret guarantee.

\paragraph{Bandit feedback.}
The most natural extension is to bandit feedback \citep{CesaBianchi_Lugosi_2006,Neu_Antos_etal_2010,Jin_Jin_etal_2019,Li_Zhao_etal_2024a}, where the learner sees only the losses on the trajectory it plays and must estimate the advantage of the unplayed actions. The cone detector of Proposition~\ref{prop:conedetect} needs only a uniform value confidence radius $\sup_\pi|J_{\ell_t}(\pi)-\widehat J_t(\pi)|\le\varepsilon_t^{\rm bandit}$. The same margin test $2\varepsilon_t^{\rm bandit}<\Delta_t$ then decides which rounds are statistically free and which sit near a wall. The open problem is to build that radius from an importance-weighted estimator over a sliding window. Its per-state-action width must track the visitation $N_{t,h}(s,a)$, the horizon, and the window length, and must aggregate to a uniform value bound concentrated at the thin-margin walls of the fan. Such a radius would turn the heuristic split into a genuine regret bound, and we leave it to future work.

\subsubsection*{Reproducibility statement}
All claims in the main text have complete proofs in Appendices~\ref{app:proofs} and~\ref{app:lower}. The experiments of Section~\ref{sec:experiments} are described in full in Appendix~\ref{app:experiments}. 

\bibliographystyle{iclr2026/iclr2026_conference}
\bibliography{references}

@article{Agarwal_Kakade_etal_2020,
  author = {Agarwal, Alekh and Kakade, Sham M. and Lee, Jason D. and Mahajan, Gaurav},
  title = {On the Theory of Policy Gradient Methods: Optimality, Approximation, and Distribution Shift},
  number = {arXiv:1908.00261},
  year = {2020},
  publisher = {arXiv},
  url = {http://arxiv.org/abs/1908.00261},
  doi = {10.48550/arXiv.1908.00261},
}

@article{Alfano_Yuan_etal_2024,
  author = {Alfano, Carlo and Yuan, Rui and Rebeschini, Patrick},
  title = {A Novel Framework for Policy Mirror Descent with General Parameterization and Linear Convergence},
  number = {arXiv:2301.13139},
  year = {2024},
  publisher = {arXiv},
  url = {http://arxiv.org/abs/2301.13139},
  doi = {10.48550/arXiv.2301.13139},
}

@inproceedings{Altman_2021,
  author = {Altman, Eitan},
  title = {Constrained Markov Decision Processes: Stochastic Modeling},
  year = {2021},
  publisher = {Routledge},
  url = {https://www.taylorfrancis.com/books/9781315140223},
  doi = {10.1201/9781315140223},
  address = {Boca Raton}
}

@article{Besbes_Gur_etal_2015,
  author = {Besbes, O. and Gur, Y. and Zeevi, A.},
  title = {Non-stationary Stochastic Optimization},
  journal = {Operations Research},
  volume = {63},
  number = {5},
  year = {2015},
  pages = {1227--1244},
  url = {http://arxiv.org/abs/1307.5449},
  doi = {10.1287/opre.2015.1408},
}

@article{Burke_Ferris_1993,
  author = {Burke, J. V. and Ferris, M. C.},
  title = {Weak Sharp Minima in Mathematical Programming},
  journal = {SIAM Journal on Control and Optimization},
  volume = {31},
  number = {5},
  year = {1993},
  pages = {1340--1359},
  publisher = {Society for Industrial and Applied Mathematics},
  url = {https://epubs.siam.org/doi/10.1137/0331063},
  doi = {10.1137/0331063},
}

@misc{Cardoso_Wang_etal_2019,
  author = {Cardoso, Adrian Rivera and Wang, He and Xu, Huan},
  title = {Large Scale Markov Decision Processes with Changing Rewards},
  year = {2019},
  url = {https://arxiv.org/abs/1905.10649v1},
}

@book{CesaBianchi_Lugosi_2006,
  author = {Cesa-Bianchi, Nicol{\`o} and Lugosi, G{\'a}bor},
  title = {Prediction, Learning, and Games},
  year = {2006},
  doi = {10.1017/CBO9780511546921},
}

@misc{Cheung_Simchi-Levi_etal_2020,
  author = {Cheung, Wang Chi and Simchi-Levi, David and Zhu, Ruihao},
  title = {Reinforcement Learning for Non-Stationary Markov Decision Processes: The Blessing of (More) Optimism},
  year = {2020},
  url = {https://arxiv.org/abs/2006.14389v1},
}

@article{Dadashi_Taiga_etal_2019,
  author = {Dadashi, Robert and Ta{\"\i}ga, Adrien Ali and Roux, Nicolas Le and Schuurmans, Dale and Bellemare, Marc G.},
  title = {The Value Function Polytope in Reinforcement Learning},
  number = {arXiv:1901.11524},
  year = {2019},
  publisher = {arXiv},
  url = {http://arxiv.org/abs/1901.11524},
  doi = {10.48550/arXiv.1901.11524},
}

@article{Dick_Gyorgy_etal_2015,
  author = {Dick, Travis and Gyorgy, Andras and Szepesvari, Csaba},
  title = {Online Learning in Markov Decision Processes with Changing Cost Sequences},
  year = {2015},
  doi = {10.14288/1.0044649},
}

@article{Eshraghi_Liang_2022,
  author = {Eshraghi, Nima and Liang, Ben},
  title = {Dynamic Regret of Online Mirror Descent for Relatively Smooth Convex Cost Functions},
  number = {arXiv:2202.12843},
  year = {2022},
  publisher = {arXiv},
  url = {http://arxiv.org/abs/2202.12843},
  doi = {10.48550/arXiv.2202.12843},
}

@article{EvenDar_Kakade_etal_2009,
  author = {Even-Dar, Eyal and Kakade, Sham. M. and Mansour, Yishay},
  title = {Online Markov Decision Processes},
  journal = {Mathematics of Operations Research},
  volume = {34},
  number = {3},
  year = {2009},
  pages = {726--736},
  publisher = {INFORMS},
  url = {https://www.jstor.org/stable/40538442},
}

@article{Fei_Yang_etal_2020,
  author = {Fei, Yingjie and Yang, Zhuoran and Wang, Zhaoran and Xie, Qiaomin},
  title = {Dynamic Regret of Policy Optimization in Non-stationary Environments},
  number = {arXiv:2007.00148},
  year = {2020},
  publisher = {arXiv},
  url = {http://arxiv.org/abs/2007.00148},
  doi = {10.48550/arXiv.2007.00148},
}

@misc{Hall_Willett_2013,
  author = {Hall, Eric C. and Willett, Rebecca M.},
  title = {Dynamical Models and Tracking Regret in Online Convex Programming},
  year = {2013},
  url = {https://arxiv.org/abs/1301.1254v1},
}

@article{Jadbabaie_Rakhlin_etal_2015,
  author = {Jadbabaie, Ali and Rakhlin, Alexander and Shahrampour, Shahin and Sridharan, Karthik},
  title = {Online Optimization: Competing with Dynamic Comparators},
  number = {arXiv:1501.06225},
  year = {2015},
  publisher = {arXiv},
  url = {http://arxiv.org/abs/1501.06225},
  doi = {10.48550/arXiv.1501.06225},
}

@misc{Jin_Jin_etal_2019,
  author = {Jin, Chi and Jin, Tiancheng and Luo, Haipeng and Sra, Suvrit and Yu, Tiancheng},
  title = {Learning Adversarial MDPs with Bandit Feedback and Unknown Transition},
  year = {2019},
  url = {https://arxiv.org/abs/1912.01192v5},
}

@inproceedings{Kakade_Langford_2002,
  author = {Kakade, S. and Langford, J.},
  title = {Approximately Optimal Approximate Reinforcement Learning},
  year = {2002},
  url = {https://www.semanticscholar.org/paper/Approximately-Optimal-Approximate-Reinforcement-Kakade-Langford/523b4ce1c2a1336962444abc1dec215756c2f3e6},
}

@inproceedings{Kakade_2001,
  author = {Kakade, Sham M},
  title = {A Natural Policy Gradient},
  booktitle = {Advances in Neural Information Processing Systems},
  volume = {14},
  year = {2001},
  publisher = {MIT Press},
  url = {https://proceedings.neurips.cc/paper_files/paper/2001/hash/4b86abe48d358ecf194c56c69108433e-Abstract.html},
}

@article{Kleuker_Plaat_etal_2025,
  author = {Kleuker, Jan Felix and Plaat, Aske and Moerland, Thomas},
  title = {On the Effect of Regularization in Policy Mirror Descent},
  number = {arXiv:2507.08718},
  year = {2025},
  publisher = {arXiv},
  url = {http://arxiv.org/abs/2507.08718},
  doi = {10.48550/arXiv.2507.08718},
}

@article{Lan_2022,
  author = {Lan, Guanghui},
  title = {Policy Mirror Descent for Reinforcement Learning: Linear Convergence, New Sampling Complexity, and Generalized Problem Classes},
  number = {arXiv:2102.00135},
  year = {2022},
  publisher = {arXiv},
  url = {http://arxiv.org/abs/2102.00135},
  doi = {10.48550/arXiv.2102.00135},
}

@article{Li_Zhao_etal_2023,
  author = {Li, Long-Fei and Zhao, Peng and Zhou, Zhi-Hua},
  title = {Dynamic Regret of Adversarial Linear Mixture MDPs},
  journal = {Advances in Neural Information Processing Systems 36},
  year = {2023},
  pages = {60685–60711},
  publisher = {Neural Information Processing Systems Foundation, Inc. (NeurIPS)},
  url = {https://papers.nips.cc/paper_files/paper/2023/file/becd02b89259774da2ede23116a80648-Paper-Conference.pdf},
  doi = {10.52202/075280-2650},
  address = {New Orleans, Louisiana, USA}
}

@misc{Li_Zhao_etal_2024a,
  author = {Li, Long-Fei and Zhao, Peng and Zhou, Zhi-Hua},
  title = {Improved Algorithm for Adversarial Linear Mixture MDPs with Bandit Feedback and Unknown Transition},
  year = {2024},
  url = {https://arxiv.org/abs/2403.04568v1},
}

@misc{Li_Zhao_etal_2024,
  author = {Li, Long-Fei and Zhao, Peng and Zhou, Zhi-Hua},
  title = {Near-Optimal Dynamic Regret for Adversarial Linear Mixture MDPs},
  year = {2024},
  url = {https://arxiv.org/abs/2411.03107v1},
}

@article{Lu_Robinson_2008,
  author = {Lu, Shu and Robinson, Stephen M.},
  title = {Normal Fans of Polyhedral Convex Sets},
  journal = {Set-Valued Analysis},
  volume = {16},
  number = {2},
  year = {2008},
  pages = {281--305},
  url = {https://doi.org/10.1007/s11228-008-0077-9},
  doi = {10.1007/s11228-008-0077-9},
}

@article{Mao_Zhang_etal_2022,
  author = {Mao, Weichao and Zhang, Kaiqing and Zhu, Ruihao and Simchi-Levi, David and Ba{\c s}ar, Tamer},
  title = {Model-Free Non-Stationary RL: Near-Optimal Regret and Applications in Multi-Agent RL and Inventory Control},
  number = {arXiv:2010.03161},
  year = {2022},
  publisher = {arXiv},
  url = {http://arxiv.org/abs/2010.03161},
  doi = {10.48550/arXiv.2010.03161},
}

@article{Milosevic_Scherf_2025,
  author = {Milosevic, Nikola and Scherf, Nico},
  title = {The Geometry of Nonlinear Reinforcement Learning},
  number = {arXiv:2509.01432},
  year = {2025},
  publisher = {arXiv},
  url = {http://arxiv.org/abs/2509.01432},
  doi = {10.48550/arXiv.2509.01432},
}

@article{Muller_Montufar_2024,
  author = {M{\"u}ller, Johannes and Mont{\'u}far, Guido},
  title = {Geometry and convergence of natural policy gradient methods},
  journal = {Information Geometry},
  volume = {7},
  number = {S1},
  year = {2024},
  pages = {485--523},
  url = {http://arxiv.org/abs/2211.02105},
  doi = {10.1007/s41884-023-00106-z},
}

@book{Nemirovskii_IUdin_1983,
  author = {Nemirovski{\u\i}, Arkadi{\u\i} Semenovich and IUdin, David Berkovich},
  title = {Problem Complexity and Method Efficiency in Optimization},
  year = {1983},
  publisher = {Wiley},
}

@inproceedings{Neu_Antos_etal_2010,
  author = {Neu, Gergely and Antos, Andras and Gy{\"o}rgy, Andr{\'a}s and Szepesv{\'a}ri, Csaba},
  title = {Online Markov Decision Processes under Bandit Feedback},
  booktitle = {Advances in Neural Information Processing Systems},
  volume = {23},
  year = {2010},
  publisher = {Curran Associates, Inc.},
  url = {https://papers.nips.cc/paper_files/paper/2010/hash/7bb060764a818184ebb1cc0d43d382aa-Abstract.html},
}

@article{Paffenholz_2010,
  author = {Paffenholz, Andreas},
  title = {Polyhedral Geometry and Linear Optimization},
  year = {2010},
}

@book{Puterman_2014,
  author = {Puterman, Martin L.},
  title = {Markov Decision Processes: Discrete Stochastic Dynamic Programming},
  year = {2014},
  publisher = {John Wiley \& Sons},
}

@article{Rakhlin_Sridharan_2014,
  author = {Rakhlin, Alexander and Sridharan, Karthik},
  title = {Online Learning with Predictable Sequences},
  number = {arXiv:1208.3728},
  year = {2014},
  publisher = {arXiv},
  url = {http://arxiv.org/abs/1208.3728},
  doi = {10.48550/arXiv.1208.3728},
}

@book{Rockafellar_1970,
  author = {Rockafellar, R. Tyrrell},
  title = {Convex Analysis},
  year = {1970},
  publisher = {Princeton University Press},
  url = {https://www.jstor.org/stable/j.ctt14bs1ff},
}

@misc{Rosenberg_Mansour_2019,
  author = {Rosenberg, Aviv and Mansour, Yishay},
  title = {Online Convex Optimization in Adversarial Markov Decision Processes},
  year = {2019},
  url = {https://arxiv.org/abs/1905.07773v1},
}

@article{Schlaginhaufen_Kamgarpour_2023,
  author = {Schlaginhaufen, Andreas and Kamgarpour, Maryam},
  title = {Identifiability and Generalizability in Constrained Inverse Reinforcement Learning},
  number = {arXiv:2306.00629},
  year = {2023},
  publisher = {arXiv},
  url = {http://arxiv.org/abs/2306.00629},
  doi = {10.48550/arXiv.2306.00629},
}

@article{Schulman_Levine_etal_2017,
  author = {Schulman, John and Levine, Sergey and Moritz, Philipp and Jordan, Michael I. and Abbeel, Pieter},
  title = {Trust Region Policy Optimization},
  number = {arXiv:1502.05477},
  year = {2017},
  publisher = {arXiv},
  url = {http://arxiv.org/abs/1502.05477},
  doi = {10.48550/arXiv.1502.05477},
}

@article{Tomar_Shani_etal_2021,
  author = {Tomar, Manan and Shani, Lior and Efroni, Yonathan and Ghavamzadeh, Mohammad},
  title = {Mirror Descent Policy Optimization},
  number = {arXiv:2005.09814},
  year = {2021},
  publisher = {arXiv},
  url = {http://arxiv.org/abs/2005.09814},
  doi = {10.48550/arXiv.2005.09814},
}

@article{Tropp_2022,
  author = {Tropp, Joel A.},
  title = {ACM 204: Lectures on Convex Geometry},
  year = {2022},
  publisher = {California Institute of Technology},
  url = {https://resolver.caltech.edu/CaltechAUTHORS:20220412-220319430},
  doi = {10.7907/GEDA-H205},
}

@article{Wei_Luo_2021,
  author = {Wei, Chen-Yu and Luo, Haipeng},
  title = {Non-stationary Reinforcement Learning without Prior Knowledge: An Optimal Black-box Approach},
  number = {arXiv:2102.05406},
  year = {2021},
  publisher = {arXiv},
  url = {http://arxiv.org/abs/2102.05406},
  doi = {10.48550/arXiv.2102.05406},
}

@article{Zhan_Cen_etal_2023,
  author = {Zhan, Wenhao and Cen, Shicong and Huang, Baihe and Chen, Yuxin and Lee, Jason D. and Chi, Yuejie},
  title = {Policy Mirror Descent for Regularized Reinforcement Learning: A Generalized Framework with Linear Convergence},
  number = {arXiv:2105.11066},
  year = {2023},
  publisher = {arXiv},
  url = {http://arxiv.org/abs/2105.11066},
  doi = {10.48550/arXiv.2105.11066},
}

@misc{Zhao_Li_etal_2022,
  author = {Zhao, Peng and Li, Long-Fei and Zhou, Zhi-Hua},
  title = {Dynamic Regret of Online Markov Decision Processes},
  year = {2022},
  url = {https://arxiv.org/abs/2208.12483v1},
}

@article{Zhao_Zhang_etal_2020,
  author = {Zhao, Peng and Zhang, Yu-Jie and Zhang, Lijun and Zhou, Zhi-Hua},
  title = {Dynamic Regret of Convex and Smooth Functions},
  number = {arXiv:2007.03479},
  year = {2020},
  publisher = {arXiv},
  url = {http://arxiv.org/abs/2007.03479},
  doi = {10.48550/arXiv.2007.03479},
}

@article{Zhao_Zhang_etal_2024,
  author = {Zhao, Peng and Zhang, Yu-Jie and Zhang, Lijun and Zhou, Zhi-Hua},
  title = {Adaptivity and Non-stationarity: Problem-dependent Dynamic Regret for Online Convex Optimization},
  number = {arXiv:2112.14368},
  year = {2024},
  publisher = {arXiv},
  url = {http://arxiv.org/abs/2112.14368},
  doi = {10.48550/arXiv.2112.14368},
}

@book{Ziegler_1995,
  author = {Ziegler, G{\"u}nter M.},
  title = {Lectures on Polytopes},
  volume = {152},
  year = {1995},
  publisher = {Springer},
  url = {http://link.springer.com/10.1007/978-1-4613-8431-1},
  doi = {10.1007/978-1-4613-8431-1},
  series = {Graduate Texts in Mathematics},
  address = {New York, NY}
}

@inproceedings{Zimin_Neu_2013,
  author = {Zimin, Alexander and Neu, Gergely},
  title = {Online learning in episodic Markovian decision processes by relative entropy policy search},
  booktitle = {Advances in Neural Information Processing Systems},
  volume = {26},
  year = {2013},
  publisher = {Curran Associates, Inc.},
  url = {https://papers.neurips.cc/paper_files/paper/2013/hash/68053af2923e00204c3ca7c6a3150cf7-Abstract.html},
}

@article{Zinkevich_2003,
  author = {Zinkevich, Martin},
  title = {Online Convex Programming and Generalized Infinitesimal Gradient Ascent},
  year = {2003},
}

@article{Beck_Teboulle_2003,
  author = {Beck, Amir and Teboulle, Marc},
  title = {Mirror Descent and Nonlinear Projected Subgradient Methods for Convex Optimization},
  journal = {Operations Research Letters},
  volume = {31},
  number = {3},
  year = {2003},
  pages = {167--175},
  publisher = {Elsevier},
}

@article{Huleihel_Pal_etal_2021,
  author = {Huleihel, Wasim and Pal, Soumyabrata and Shayevitz, Ofer},
  title = {Learning User Preferences in Non-Stationary Environments},
  number = {arXiv:2101.12506},
  year = {2021},
  publisher = {arXiv},
  url = {http://arxiv.org/abs/2101.12506},
  doi = {10.48550/arXiv.2101.12506},
}

@inproceedings{Schafer_Konstan_etal_1999,
  author = {Schafer, J. Ben and Konstan, Joseph and Riedl, John},
  title = {Recommender systems in e-commerce},
  booktitle = {Proceedings of the 1st ACM conference on Electronic commerce},
  year = {1999},
  pages = {158–166},
  publisher = {Association for Computing Machinery},
  url = {https://dl.acm.org/doi/10.1145/336992.337035},
  doi = {10.1145/336992.337035},
  series = {EC ’99},
  address = {New York, NY, USA}
}

@article{Shani_Brafman_etal_2015,
  author = {Shani, Guy and Brafman, Ronen I. and Heckerman, David},
  title = {An MDP-based Recommender System},
  number = {arXiv:1301.0600},
  year = {2015},
  publisher = {arXiv},
  url = {http://arxiv.org/abs/1301.0600},
  doi = {10.48550/arXiv.1301.0600},
}

@book{Sutton_Barto_2018,
  author = {Sutton, Richard S. and Barto, Andrew G.},
  title = {Reinforcement Learning: An Introduction},
  year = {2018},
  publisher = {MIT Press},
}

\appendix

\section{Proofs from the main text}
\label{app:proofs}

\subsection{Within-face selection}

When the previous optimal face $F_{t-1}$ is higher-dimensional, the learner chooses one occupancy inside it, and that choice is the source of the selection error $\varepsilon_t^{\rm sel}$.

\begin{proposition}[Predictive selection]
	\label{prop:predsel}
	Let $m_t$ be a prediction of $\ell_t$ and choose $x_t\in\argmin_{u\in F_{t-1}}\ip{u}{m_t}$. If $\delta_t=\norm{m_t-\ell_t}_*$, then $\varepsilon_t^{\rm sel}\le\diam(F_{t-1})\,\delta_t$.
\end{proposition}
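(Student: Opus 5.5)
The argument is the standard "optimism with a predictable sequence" comparison, carried out on the face $F_{t-1}$ rather than on all of $\Q$. Let $u^\star\in\argmin_{u\in F_{t-1}}\ip{u}{\ell_t}$ be the best point of the old face under the true loss; it exists because $F_{t-1}$ is a nonempty compact face of the polytope $\Q$. By the definition in Theorem~\ref{thm:decomp}, the quantity to bound is
\[
	\varepsilon_t^{\rm sel}=\ip{x_t}{\ell_t}-\ip{u^\star}{\ell_t}=\ip{x_t-u^\star}{\ell_t}.
\]
We split this by inserting the prediction:
\[
	\ip{x_t-u^\star}{\ell_t}=\ip{x_t-u^\star}{m_t}+\ip{x_t-u^\star}{\ell_t-m_t}.
\]

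\emph{First term.} We have $x_t\in\argmin_{u\in F_{t-1}}\ip{u}{m_t}$ and $u^\star\in F_{t-1}$. So $\ip{x_t}{m_t}\le\ip{u^\star}{m_t}$, and the first term is nonpositive.

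\emph{Second term.} Apply Hölder's inequality for the dual pair $(\norm{\cdot},\norm{\cdot}_*)$, the pairing that already defines $\delta_t$ and the occupancy distance in Section~\ref{sec:geometry}:
\[
	\ip{x_t-u^\star}{\ell_t-m_t}\le\norm{x_t-u^\star}\,\norm{\ell_t-m_t}_*.
\]
Both $x_t$ and $u^\star$ lie in $F_{t-1}$, so $\norm{x_t-u^\star}\le\diam(F_{t-1})$, where the diameter is taken in the primal norm $\norm{\cdot}$. Combining the two terms gives $\varepsilon_t^{\rm sel}\le\diam(F_{t-1})\,\delta_t$.

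No step is a real obstacle; only two bookkeeping points need care. First, $\diam(F_{t-1})$ must be measured in the norm dual to the one defining $\delta_t$, and we fix this convention explicitly. Second, the argmin defining $x_t$ must be taken over $F_{t-1}$ and not over all of $\Q$. If it ranged over $\Q$, the point $x_t$ could leave the face, the decomposition of Theorem~\ref{thm:decomp} would no longer apply, and the diameter bound would degrade to $\diam(\Q)$. When $F_{t-1}$ is a vertex, the diameter is zero and the bound recovers the vanishing selection error of the nondegenerate case.
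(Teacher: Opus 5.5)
Your proposal is correct and matches the paper's proof step for step. Both take the best point $u^\star$ of $F_{t-1}$ under $\ell_t$, use the optimality of $x_t$ for $m_t$ to drop $\ip{x_t-u^\star}{m_t}\le 0$, and finish with H\"older's inequality and $\norm{x_t-u^\star}\le\diam(F_{t-1})$.
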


A learner that keeps the directions the old face leaves undetermined available for a later loss uses a Bregman selector $x_t\in\argmin_{u\in F_{t-1}}\bigl\{\ip{u}{m_t}+\lambda D_\psi(u,z_t)\bigr\}$, with $z_t$ the previous representative and $D_\psi$ a Bregman divergence \citep{Nemirovskii_IUdin_1983,Beck_Teboulle_2003}.

\begin{proposition}[Bregman face selection]
	\label{prop:bregsel}
	With $R_t^\psi=\sup_{u,v\in F_{t-1}}|D_\psi(u,z_t)-D_\psi(v,z_t)|$, the Bregman selector satisfies $\varepsilon_t^{\rm sel}\le\diam(F_{t-1})\,\delta_t+\lambda R_t^\psi$.
\end{proposition}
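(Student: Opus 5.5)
The plan is to compare the Bregman selector $x_t$ directly with the best point of the old face under the true loss, and to split the resulting gap into a prediction term and a regularization term. Let $u_t^\star\in\argmin_{u\in F_{t-1}}\ip{u}{\ell_t}$. This point exists because $F_{t-1}$ is a face of the polytope $\Q$, hence compact, and the objective is linear. Then, by the definition in Theorem~\ref{thm:decomp},
\[
	\varepsilon_t^{\rm sel}=\ip{x_t-u_t^\star}{\ell_t}.
\]
I will add and subtract the prediction $m_t$, which gives
\[
	\varepsilon_t^{\rm sel}=\ip{x_t-u_t^\star}{m_t}+\ip{x_t-u_t^\star}{\ell_t-m_t}.
\]

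For the second term I use the dual-norm (Hölder) inequality. Both $x_t$ and $u_t^\star$ lie in $F_{t-1}$, so
\[
	\ip{x_t-u_t^\star}{\ell_t-m_t}\le\norm{x_t-u_t^\star}\,\norm{\ell_t-m_t}_*\le\diam(F_{t-1})\,\delta_t.
\]
This is exactly the argument behind Proposition~\ref{prop:predsel}. There, the first term is nonpositive because $x_t$ minimizes $\ip{\cdot}{m_t}$ over $F_{t-1}$.

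For the first term I use the optimality of $x_t$ for the regularized objective. The point $u_t^\star$ is feasible in $F_{t-1}$, so
\[
	\ip{x_t}{m_t}+\lambda D_\psi(x_t,z_t)\le\ip{u_t^\star}{m_t}+\lambda D_\psi(u_t^\star,z_t).
\]
Rearranging gives
\[
	\ip{x_t-u_t^\star}{m_t}\le\lambda\bigl[D_\psi(u_t^\star,z_t)-D_\psi(x_t,z_t)\bigr]\le\lambda R_t^\psi,
\]
where the last step holds by the definition of $R_t^\psi$ as a supremum over pairs in $F_{t-1}$. Adding the two bounds yields
\[
	\varepsilon_t^{\rm sel}\le\diam(F_{t-1})\,\delta_t+\lambda R_t^\psi.
\]

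I expect no step to be a real obstacle. The only point needing care is that the Bregman selector is well defined, meaning the minimum over $F_{t-1}$ is attained. I would handle this by assuming, as is standard, that $D_\psi(\cdot,z_t)$ is lower semicontinuous and finite on $F_{t-1}$, which holds for instance when $\psi$ is continuous on $\Q$. Under that assumption, compactness of the face gives existence. If $x_t$ is only an approximate minimizer, the argument still goes through with an additive slack. Finally, I would note that $\lambda R_t^\psi$ is the price of stickiness: it vanishes when $\lambda=0$, which recovers Proposition~\ref{prop:predsel} as a special case.
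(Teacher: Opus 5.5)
Your proof is correct and follows the paper's argument essentially line for line. You compare $x_t$ against $u_t^\star\in\argmin_{u\in F_{t-1}}\ip{u}{\ell_t}$, bound the prediction part by Hölder's inequality to get $\diam(F_{t-1})\,\delta_t$, and bound $\ip{x_t-u_t^\star}{m_t}$ by $\lambda R_t^\psi$ using the optimality of the regularized selector against the feasible point $u_t^\star$; your remark on attainment of the minimizer is a small point the paper leaves implicit.
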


\begin{proof}[Proof of Propositions~\ref{prop:predsel} and~\ref{prop:bregsel}]
	Each selector is controlled by its own optimality condition, which bounds the loss gap between its chosen point and the best point of the face. Let $u_t^*\in\argmin_{u\in F_{t-1}}\ip{u}{\ell_t}$. For the predictive selector, optimality of $x_t$ for $m_t$ gives $\ip{x_t-u_t^*}{m_t}\le0$, so
	\[
		\varepsilon_t^{\rm sel}=\ip{x_t-u_t^*}{\ell_t}\le\ip{x_t-u_t^*}{\ell_t-m_t}\le\norm{x_t-u_t^*}\,\delta_t\le\diam(F_{t-1})\,\delta_t.
	\]
	For the Bregman selector the optimality condition carries the extra regularizer, giving $\ip{x_t-u_t^*}{m_t}\le\lambda(D_\psi(u_t^*,z_t)-D_\psi(x_t,z_t))\le\lambda R_t^\psi$, and the same chain of inequalities adds $\lambda R_t^\psi$ to the bound.
\end{proof}

\subsection{The cone detector}

\begin{proposition}[Prediction error matters only near walls]
	\label{prop:conedetect}
	Let $m_t$ be a predicted loss with $\sup_\pi|J_{\ell_t}(\pi)-J_{m_t}(\pi)|\le\varepsilon_t$, and let $\widehat\pi_t$ minimize $J_{m_t}$ at a deterministic policy, so its occupancy is a vertex of $\Q$. Then $J_{\ell_t}(\widehat\pi_t)-J_{\ell_t}^*\le 2\varepsilon_t$. Let $\Delta_t=\min\{\,\ip{v}{\ell_t}-J_{\ell_t}^*:\ v\text{ a vertex of }\Q,\ v\notin F_t\,\}$ be the smallest suboptimality among vertices outside the optimal face, the weak-sharpness margin of the current cone. If $\Delta_t>2\varepsilon_t$ then $\widehat\pi_t\in\Pi_t^*$ and the round contributes no regret.
\end{proposition}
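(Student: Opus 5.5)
The plan has two parts: a standard two-sided comparison for the regret bound, then a vertex argument for the margin statement. Throughout, write $J_\ell(\pi)=\ip{q^\pi}{\ell}$, so that $\varepsilon_t$ controls the value gap between $\ell_t$ and $m_t$ uniformly over all Markov policies, and hence over all points of $\Q$.

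\emph{Regret bound.} Let $\pi_t^*$ be any policy optimal for $\ell_t$. Since $\widehat\pi_t$ minimizes $J_{m_t}$, we have $J_{m_t}(\widehat\pi_t)\le J_{m_t}(\pi_t^*)$. Chaining the uniform error bound on both ends gives
\[
J_{\ell_t}(\widehat\pi_t)\le J_{m_t}(\widehat\pi_t)+\varepsilon_t\le J_{m_t}(\pi_t^*)+\varepsilon_t\le J_{\ell_t}(\pi_t^*)+2\varepsilon_t=J_{\ell_t}^*+2\varepsilon_t .
\]
This uses only the sup bound and optimality of $\widehat\pi_t$.

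\emph{Margin statement.} First I need the fact that the occupancy of a deterministic Markov policy is a vertex of $\Q$. The proposition takes this as given, but I would justify it briefly. A deterministic $\pi$ is the unique minimizer of $\ip{q}{\ell}$ over $\Q$ for the loss $\ell_h(s,a)=\mathbf 1[a\ne\pi_h(s)]$. Any $q\in\Q$ attaining value $0$ puts no mass on actions other than $\pi_h(s)$. The flow constraints then determine $q$ layer by layer, so $q=q^\pi$, and the exposed face is the single point $q^\pi$.

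Write $v=q^{\widehat\pi_t}$ for this vertex. Then $\ip{v}{\ell_t}-J_{\ell_t}^*\le2\varepsilon_t<\Delta_t$. By definition of $\Delta_t$, every vertex outside $F_t$ has suboptimality at least $\Delta_t$, so $v$ cannot lie outside $F_t$. Hence $v\in F_t$, which means $\widehat\pi_t$ has optimal occupancy. Since $\Pi_t^*$ is the set of policies whose occupancies lie in $F_t$, this gives $\widehat\pi_t\in\Pi_t^*$ and zero regret for the round.

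\emph{Edge case.} If every vertex lies in $F_t$, the set defining $\Delta_t$ is empty. With the convention $\Delta_t=+\infty$, the claim holds trivially, because then $F_t=\Q$.

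\emph{Main obstacle.} The only step that needs care is the vertex claim for deterministic policies. If $\widehat\pi_t$ were merely a minimizing point in the relative interior of a face, the margin argument would fail, since $\Delta_t$ is defined only over vertices. The exposing-loss construction above is what closes this gap.
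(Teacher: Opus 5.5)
Your proposal is correct and follows the paper's proof: the same three-step chain through the uniform value bound gives the $2\varepsilon_t$ regret bound, and the same vertex dichotomy (either in $F_t$ or suboptimal by at least $\Delta_t$) gives the margin statement. Your exposing-loss argument that a deterministic policy's occupancy is a vertex, and the $\Delta_t=+\infty$ convention for the case $F_t=\Q$, fill in details the paper takes for granted.
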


\begin{proof}[Proof of Proposition~\ref{prop:conedetect}]
	The argument passes through the uniform value bound twice, once from the model to the truth and once back, after which the cone margin decides whether the model optimum can select the wrong face. Optimality of $\widehat\pi_t$ for $m_t$ gives $J_{m_t}(\widehat\pi_t)\le J_{m_t}(\pi_t^*)$ for any $\pi_t^*\in\Pi_t^*$, and two applications of the uniform bound $\sup_\pi|J_{\ell_t}(\pi)-J_{m_t}(\pi)|\le\varepsilon_t$ give $J_{\ell_t}(\widehat\pi_t)\le J_{m_t}(\widehat\pi_t)+\varepsilon_t\le J_{m_t}(\pi_t^*)+\varepsilon_t\le J_{\ell_t}(\pi_t^*)+2\varepsilon_t$. The occupancy of $\widehat\pi_t$ is a vertex, so it is either in $F_t$ or suboptimal by at least $\Delta_t$. When $\Delta_t$ exceeds $2\varepsilon_t$ the second case contradicts the bound just shown, so $\widehat\pi_t\in\Pi_t^*$ and the round contributes no regret.
\end{proof}

\subsection{The advantage-thresholded update}
\label{app:gate}

\begin{proposition}[Effect of the penalty]
	\label{prop:gate}
	Run the committed thresholded update under full information, restricting the play to $F_{t-1}$ through the selector of Proposition~\ref{prop:bregsel}, with an entropy regularizer and predictions of value error $\varepsilon_t$, and let $\Delta_t$ be the cone margin of Proposition~\ref{prop:conedetect} and $B=\max_{q,q'\in\Q}\ip{q-q'}{\ell_t}$ the range of the objective over $\Q$. Then
	\[
		\DReg_T \le r_1 + \sum_{t=2}^T\Gamma_t^{\mathrm{cross}} + B\sum_{t=2}^T\mathbf 1\{2\varepsilon_t\ge\Delta_t\} + \!\!\sum_{t:\,2\varepsilon_t<\Delta_t}\!\!\bigl(\diam(F_{t-1})\,\delta_t+\lambda R_t^\psi\bigr),
	\]
	so on every round whose margin the prediction resolves ($2\varepsilon_t<\Delta_t$) the tracker adds nothing to the intrinsic price beyond the controllable selection term. The penalty is not required for this bound, which holds for the unpenalized selector as well. At a fixed step size $\eta$ it nonetheless accelerates the tracker by the causal factor of Corollary~\ref{cor:causal}. To see this, take a confidently suboptimal action at layer $h$ with optimal advantage $\beta$ and on-policy gap only $g=\beta/(H-h+1)$. The penalty suppresses that action at rate $\eta(g+\lambda)$, against rate $\eta g$ for a plain mirror step. Its contribution to regret is therefore smaller by the factor $\Omega\bigl(1+\lambda(H-h+1)/\beta\bigr)$, and the two rates coincide only as $\eta\to\infty$.
\end{proposition}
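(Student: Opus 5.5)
The plan has two parts: the regret bound, which comes from the exact decomposition, and the rate comparison, which comes from a one-step mirror-descent calculation.

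\emph{The regret bound.} Since the play satisfies $x_t\in F_{t-1}$ for every $t\ge2$, Theorem~\ref{thm:decomp} applies and gives
\[
\DReg_T=r_1+\Gamma_{2:T}^{\mathrm{cross}}+\sum_{t\ge2}\varepsilon_t^{\rm sel}.
\]
So the only work is to bound $\varepsilon_t^{\rm sel}$ round by round, splitting rounds into two cases according to the indicator $\mathbf 1\{2\varepsilon_t\ge\Delta_t\}$.

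\emph{Unresolved rounds ($2\varepsilon_t\ge\Delta_t$).} I use the crude bound
\[
\varepsilon_t^{\rm sel}=\ip{x_t}{\ell_t}-\min_{u\in F_{t-1}}\ip{u}{\ell_t}\le\max_{q,q'\in\Q}\ip{q-q'}{\ell_t}=B,
\]
which holds because both points lie in $\Q$.

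\emph{Resolved rounds ($2\varepsilon_t<\Delta_t$).} Proposition~\ref{prop:bregsel} gives directly $\varepsilon_t^{\rm sel}\le\diam(F_{t-1})\,\delta_t+\lambda R_t^\psi$. This needs only that the representative is chosen by the Bregman selector on $F_{t-1}$. The threshold penalty is linear in $\pi$ and constant on the near set; I will fold it into the linear term $m_t$, or else note that it only enlarges the optimality-condition slack by a quantity absorbed into $R_t^\psi$.

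This is the point to check carefully, and I expect it to be the main obstacle. If the penalty is not constant on $F_{t-1}$, the optimality inequality acquires an extra term $\lambda(\text{penalty}(u_t^*)-\text{penalty}(x_t))$. To dispose of it I would use Proposition~\ref{prop:conedetect}: on resolved rounds the model's near set contains the support of every $F_t$ policy, so the penalty does not discriminate among the relevant points of $F_{t-1}$. Otherwise I would state the bound for the unpenalized selector, as the statement explicitly permits.

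Summing the two cases yields the displayed inequality. Since neither case used the penalty, the same argument shows the bound holds for the unpenalized selector.

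\emph{The acceleration claim.} I work on a single state-step with entropy regularizer, where the mirror step is multiplicative weights. The unpenalized update gives $\pi_{t+1}(a)\propto\pi_t(a)e^{-\eta g}$ relative to the optimal action, with $g$ the on-policy gap. The penalized update adds $\lambda$ to the exponent for $a\notin\Aset^{\rm near}$, which is where a confidently suboptimal action lies. Its log-odds therefore decay at rate $\eta(g+\lambda)$ instead of $\eta g$.

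By Corollary~\ref{cor:causal}, an advantage $\beta$ at layer $h$ with uniform per-layer margin spreads as $g=\beta/(H-h+1)$. The time the action retains non-negligible mass scales like $1/(\eta\cdot\text{rate})$, and so does its regret contribution. The ratio of contributions is then
\[
\frac{g+\lambda}{g}=1+\frac{\lambda(H-h+1)}{\beta},
\]
which is the claimed $\Omega$ factor. As $\eta\to\infty$ both updates zero the action in one step, so the rates coincide.
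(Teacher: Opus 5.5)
Your proposal is correct and follows essentially the paper's route for both parts: the bound comes from the exact decomposition of Theorem~\ref{thm:decomp}, with Proposition~\ref{prop:bregsel} on resolved rounds and the range $B$ on unresolved ones. The rate claim comes from the same exponential-weights odds comparison, $o_{t+1}=o_te^{-\eta g}$ against $o_te^{-\eta(g+\lambda)}$, summed over the geometric decay. Your worry about the threshold penalty entering the selector is moot, because the played point is fixed by the selector of Proposition~\ref{prop:bregsel} and the penalty acts only on the policy update; your proposed fix of folding it into $m_t$ would fail in any case, since $\pi_h(a\mid s)$ is not linear in the occupancy.
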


We work under full information with an optimistic model $m_t$ of value error $\varepsilon_t=\sup_\pi|J_{\ell_t}(\pi)-J_{m_t}(\pi)|$ and selection error $\delta_t=\norm{m_t-\ell_t}_*$, an entropy regularizer $\psi$ that is $1$-strongly convex on each simplex factor, and the committed selector $x_t\in\argmin_{u\in F_{t-1}}\ip{u}{m_t}$. Write $B=\max_{q,q'\in\Q}\ip{q-q'}{\ell_t}$ for the range of the objective.

\begin{proof}[Proof of the bound]
	This bound assembles the decomposition, the selection bound, and the cone detector, splitting the rounds by whether the prediction resolves the margin. The play stays in $F_{t-1}$, and Theorem~\ref{thm:decomp} gives $\DReg_T=r_1+\sum_t\Gamma_t^{\mathrm{cross}}+\sum_t\varepsilon_t^{\rm sel}$, with Proposition~\ref{prop:bregsel} bounding $\varepsilon_t^{\rm sel}\le\diam(F_{t-1})\delta_t+\lambda R_t^\psi$. On a round with $2\varepsilon_t<\Delta_t$, a deterministic model optimum $\widehat\pi_t$ lies in $\Pi_t^*$ by Proposition~\ref{prop:conedetect}, so $m_t$ and $\ell_t$ select the same face. If in addition $F_{t-1}\cap F_t\neq\emptyset$, the selector attains $J_t^*$ over $F_{t-1}$, which gives $\varepsilon_t^{\rm sel}=0$ and $\Gamma_t^{\mathrm{cross}}=0$, a free round. On a round with $2\varepsilon_t\ge\Delta_t$ we fall back on $r_t\le B$. Collecting the three contributions yields the stated inequality.
\end{proof}

\begin{proof}[Proof of the suppression rate]
	Comparing the penalized and plain exponential-weights updates \citep{CesaBianchi_Lugosi_2006} on one suboptimal action comes down to tracking the odds each keeps on it. Fix a state-step $(s,h)$ with optimal action $a^*$ and a suboptimal action $a^-$ of optimal advantage $A^*(a^-)=\beta$. Its on-policy one-step gap is then $g=Q^{\pi_t}(s,a^-)-Q^{\pi_t}(s,a^*)=\beta/(H-h+1)$, the causal weighting of Section~\ref{sec:bellman}. The entropy update sets $\pi_{t+1}(a\mid s)\propto\pi_t(a\mid s)e^{-\eta c(a)}$, with cost $c=Q^{\pi_t}$ for the plain step and $c=Q^{\pi_t}+\lambda\mathbf 1\{a\notin\Aset_{t,h}^{\rm near}(s)\}$ for the penalized step. Taking the ratio of the masses on $a^-$ and $a^*$ cancels both the partition function and the optimal action's zero penalty, so the odds $o_t$ on $a^-$ satisfy $o^{\rm md}_{t+1}=o_t e^{-\eta g}$ for the plain step and $o^{\rm pen}_{t+1}=o_t e^{-\eta(g+\lambda)}$ for the penalized step. With the optimal face held fixed, the per-round regret from $(s,h)$ is $\beta\,o_t/(1+o_t)\in[\tfrac12\beta\min(o_t,1),\beta\min(o_t,1)]$. Summing the geometric decays then gives totals proportional to $1/(1-e^{-\eta g})$ and $1/(1-e^{-\eta(g+\lambda)})$, whose ratio is $\Omega((g+\lambda)/g)=\Omega(1+\lambda(H-h+1)/\beta)$ whenever $\eta g\lesssim1$. As $\eta\to\infty$ both updates collapse to the hard $\argmin$, so the penalty alters only the finite-step transient and leaves the optimum unchanged.
\end{proof}

\begin{figure}[h]
	\centering
	\includegraphics[width=0.88\linewidth]{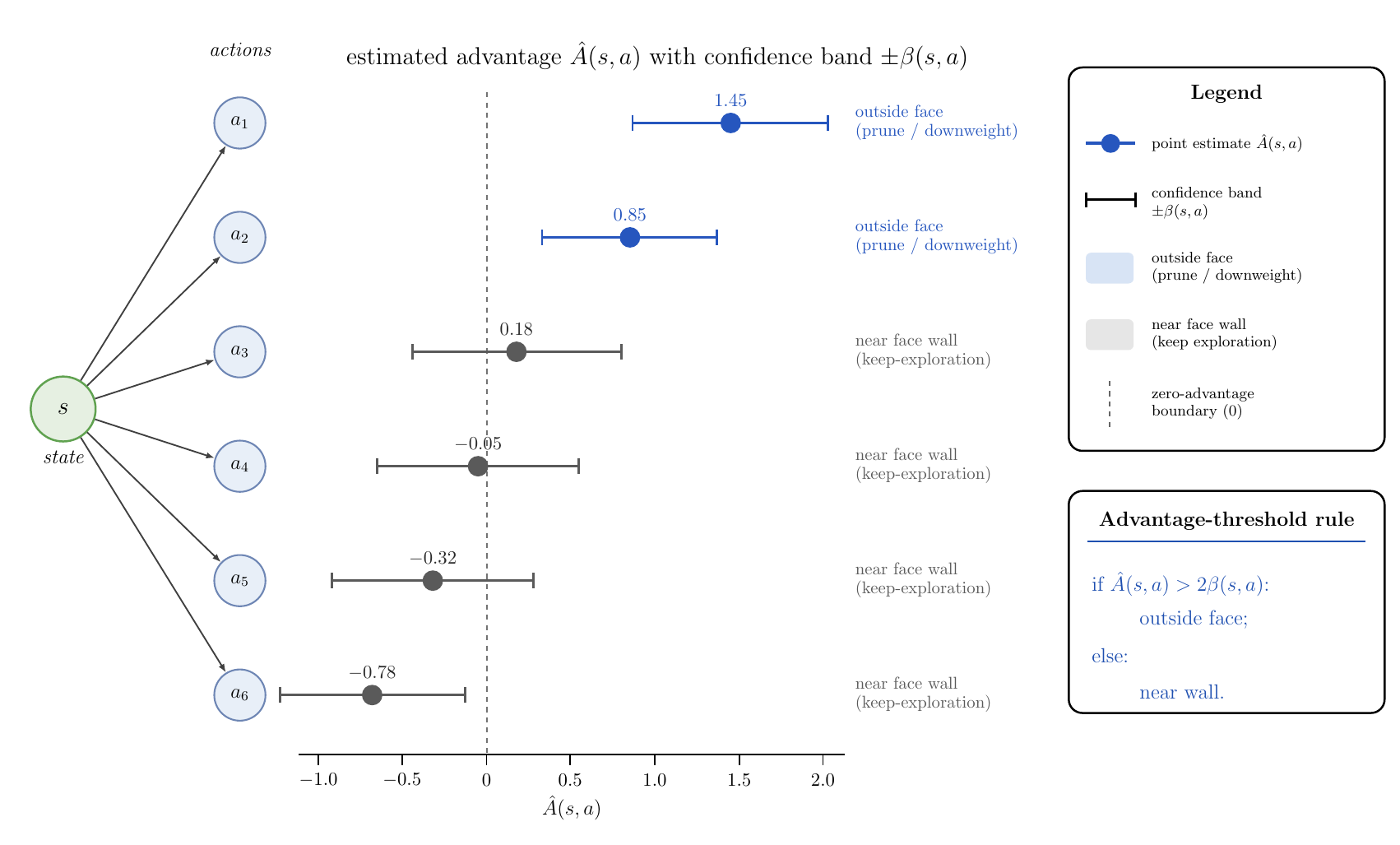}
	\caption{The advantage threshold of Section~\ref{sec:algorithms}. Actions whose estimated advantage exceeds their confidence band lie outside the optimal face and lose mass, while actions whose confidence band crosses zero lie near a wall of the fan and retain mass pending further evidence.}
	\label{fig:advantage-gate}
\end{figure}

\begin{remark}[Scope of Proposition~\ref{prop:gate}]
	The bound is a composition of Theorem~\ref{thm:decomp}, Proposition~\ref{prop:conedetect}, and Propositions~\ref{prop:predsel}--\ref{prop:bregsel}, and the penalty enters only through the suppression rate. Its uniform-margin corollary takes $2\varepsilon_t<\Delta_t$ on every round, which reduces the regret to $r_1+\Gamma_{2:T}^{\mathrm{cross}}$ in the vertex case. That corollary assumes the loss never lingers near a wall. Proposition~\ref{prop:lower} shows the lingering case is the hard one, and the reading is honest only for benign non-stationarity.
\end{remark}

\section{Separations and lower bounds}
\label{app:lower}

Proposition~\ref{prop:largevar} shows that loss variation can be arbitrarily large while the intrinsic price is zero. Whether a given crossing is free or priced is decided entirely by the relative position of the old and new optimal faces, as Figure~\ref{fig:degenerate} illustrates. The complementary phenomenon is that an unpredictable crossing forces a fixed positive price on every learner, whatever its strategy. This invariance is the precise sense in which the price is intrinsic.

\begin{figure}[h]
	\centering
	\includegraphics[width=0.72\linewidth]{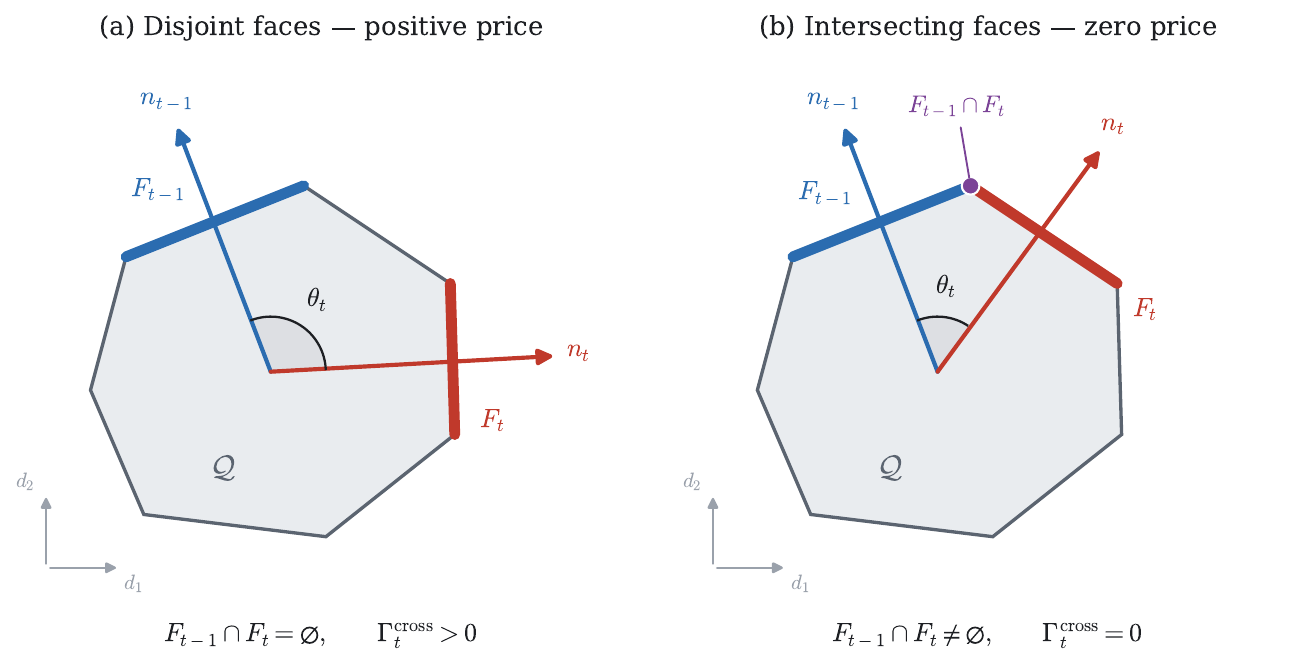}
	\caption{Whether a crossing is priced depends on the relative position of the old and new optimal faces. When the faces are disjoint (left) every old optimal occupancy is suboptimal under the new loss and the price is positive. When they intersect (right) some old occupancy remains optimal and the price is zero, regardless of how far the loss has moved.}
	\label{fig:degenerate}
\end{figure}

\begin{proposition}[Unpredictable priced crossings are unavoidable]
	\label{prop:lower}
	Consider a one-state, one-step MDP with two actions. At each round independently let $\ell_t=(0,\gamma)$ or $(\gamma,0)$ with equal probability. For any learner, even one that observes $\ell_t$ in full after acting,
	\[
		\E[\DReg_T]\ge\frac{\gamma T}{2},
	\]
	and the previous-face tracker attains this rate up to the first-round convention.
\end{proposition}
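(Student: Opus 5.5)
The plan has two parts: a lower bound that holds for any learner, and a matching upper bound for the previous-face tracker.

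\textbf{Lower bound.} Here $\Q$ is the simplex on two points, so the learner plays some $x_t=(p_t,1-p_t)$ with $p_t\in[0,1]$. The probability $p_t$ on the first action may depend on the learner's internal randomness and on the past losses $\ell_1,\dots,\ell_{t-1}$, but not on $\ell_t$. Under either loss the optimum $\min_{q\in\Q}\ip{q}{\ell_t}$ equals $0$. The per-round regret is therefore $\gamma(1-p_t)$ when $\ell_t=(0,\gamma)$ and $\gamma p_t$ when $\ell_t=(\gamma,0)$.

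Since $\ell_t$ is a fair coin independent of $\mathcal F_{t-1}$ and $p_t$ is $\mathcal F_{t-1}$-measurable, I will condition on $\mathcal F_{t-1}$ (enlarged to include the learner's randomness). This gives
\[
\E[r_t\mid\mathcal F_{t-1}]=\tfrac12\gamma(1-p_t)+\tfrac12\gamma p_t=\tfrac{\gamma}{2}.
\]
Taking expectations and summing over $t$ then yields $\E[\DReg_T]=\gamma T/2$, which is equality and so in particular proves the bound. Full-information feedback is harmless here: observing $\ell_t$ after acting only enlarges $\mathcal F_t$, while independence makes $\ell_{t+1}$ unpredictable from it.

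\textbf{Tracker upper bound.} The previous-face tracker plays $x_t=v_{t-1}$, the vertex that was optimal under $\ell_{t-1}$. Each face $F_t$ is a single vertex, so the selection error vanishes. By Theorem~\ref{thm:decomp}, $r_t=\Gamma_t^{\mathrm{cross}}$ for $t\ge2$.

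The price is $\gamma$ when $\ell_t\neq\ell_{t-1}$, because the faces are then disjoint, and $0$ when the two losses agree. This event has probability $1/2$ independently of the past. Hence $\E[\DReg_T]=\E[r_1]+\gamma(T-1)/2$. With any first-round play, $\E[r_1]=\gamma/2$ by the same conditioning argument, which gives exactly $\gamma T/2$; the ``first-round convention'' in the statement only concerns how $r_1$ is counted.

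\textbf{Main obstacle.} The only delicate point is making the measurability argument precise for randomized and adaptive learners, that is, checking that $p_t$ is independent of $\ell_t$. This follows from the losses being drawn obliviously and independently of the learner's randomness. Everything else is direct computation.
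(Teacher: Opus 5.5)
Your proposal is correct and follows the paper's own argument. You condition on the past so that the fair, independent coin makes every committed play cost $\gamma/2$ in expectation against an optimum of $0$, and you observe that the previous-face tracker errs exactly on sign flips, which occur with probability one half. Your write-up is slightly sharper than the paper's, since it records the equality $\E[\DReg_T]=\gamma T/2$ for every learner and settles the first-round convention explicitly through $\E[r_1]=\gamma/2$.
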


\begin{proof}
	Before observing $\ell_t$ the learner commits to a distribution $p_t$ over the two actions. Conditional on the past the optimal action is uniform and independent of $p_t$, so the expected loss of any choice is $\gamma/2$ while the optimal loss is $0$. Summing the per-round gap of $\gamma/2$ over $T$ rounds gives the bound. A learner that repeats the previous round's optimal action errs precisely when the sign flips, which happens with probability one half each round, matching the rate.
\end{proof}

The degenerate case adds a second irreducible term. Two loss sequences can present the same intrinsic price and yet impose different regret on a fixed within-face selector. The selection error of Theorem~\ref{thm:decomp} therefore cannot be absorbed into the price.

\begin{proposition}[Degenerate faces require selection]
	\label{prop:degsel}
	There exist two sequences of optimal faces with identical $\Gamma_t^{\mathrm{cross}}$ but different regret for a fixed selector $x_t\in F_{t-1}$. Hence $\Gamma_t^{\mathrm{cross}}$ alone cannot characterize algorithmic regret under degeneracy.
\end{proposition}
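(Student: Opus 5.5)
The statement is existential, so I will prove it by an explicit construction. The strategy is to fix the intrinsic price (easiest at zero) and vary only the position of the new loss's best point inside a degenerate old face. By Theorem~\ref{thm:decomp}, regret equals $\Gamma_t^{\mathrm{cross}}+\varepsilon_t^{\rm sel}$. It therefore suffices to exhibit two sequences with equal $\Gamma_t^{\mathrm{cross}}$ but different $\varepsilon_t^{\rm sel}$ for one fixed selector rule.

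I would take the simplest polytope with a nontrivial face: a one-state, one-step MDP with three actions, so $\Q$ is the simplex $\Delta_3$ with vertices $e_1,e_2,e_3$. The selector is fixed as the lexicographic tie-break, $x_t=$ the lowest-index vertex of $F_{t-1}$.

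Both sequences share the first loss $\ell_1=(0,0,\gamma)$, so $F_1=\mathrm{conv}\{e_1,e_2\}$ is an edge, the degenerate face.
\begin{itemize}
\item Sequence (a) uses $\ell_2=(0,\gamma,\gamma)$, giving $F_2=\{e_1\}$.
\item Sequence (b) uses $\ell_2=(\gamma,0,\gamma)$, giving $F_2=\{e_2\}$.
\end{itemize}
In both cases $F_1\cap F_2\neq\emptyset$, so $\min_{u\in F_1}\ip{u}{\ell_2}=0=\min_q\ip{q}{\ell_2}$ and $\Gamma_2^{\mathrm{cross}}=0$ on both sequences. The round-one term $r_1$ is also common, since the round-one play and $\ell_1$ coincide.

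The fixed selector plays $x_2=e_1$ in both sequences. It pays $0$ in (a) and $\gamma$ in (b). So $\varepsilon_2^{\rm sel}$ is $0$ versus $\gamma$, and the dynamic regrets differ by $\gamma$ while the price sequences are identical.

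To make the point robust, I would also note that no deterministic selector escapes. Any fixed rule picks some point $x_2\in F_1$ depending only on the history, which is identical across (a) and (b). Its loss under the two losses sums to $\ip{x_2}{(\gamma,\gamma,2\gamma)}=\gamma$, since $x_2$ puts no mass on $e_3$. Hence at least one sequence incurs selection error $\ge\gamma/2$, while the other can be zero for a suitable choice. Repeating the construction over $T$ rounds (resetting to $\ell_1$ on odd rounds) scales the gap linearly in $T$ at zero total price.

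The only delicate step is the bookkeeping of the definitions. I must check that $F_{t-1}$ is read as the full exposed face and not a single vertex. I must also check that the two sequences really share everything the selector conditions on, so the play $x_t$ is literally the same point. Once the common prefix is arranged, the rest is a direct evaluation of inner products.
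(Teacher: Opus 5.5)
Your construction is correct and uses the same idea as the paper's sketch: a degenerate old face, two new losses that select different sub-faces of it at equal price, and a fixed selector committed to one of them. You instantiate it on an edge of the simplex with the price pinned at $0$, rather than on a prefix face of a layered binary tree, which makes every quantity explicit.

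One small caveat on your robustness remark. The midpoint selector $x_2=\tfrac12(e_1+e_2)$ pays $\gamma/2$ on both (a) and (b), so that pair does not show differing regret for it. To show that every selector sees differing regret at equal price, compare against the stationary continuation $\ell_2=\ell_1$, under which every point of $F_1$ has zero selection error.
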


\begin{proof}[Proof sketch]
	Use a layered binary-tree MDP and let $F_{t-1}$ be the set of root-to-leaf paths sharing a fixed prefix, so the suffix is free. Construct two new faces $F_t$ and $F_t'$ that share the same best point inside $F_{t-1}$, hence the same $\Gamma_t^{\mathrm{cross}}$, but require opposite suffixes. The oracle point of $F_{t-1}$ pays the same price in both, while a fixed selector that committed to one suffix is cheap against the compatible face and expensive against the other. The selection error is therefore an independent degree of freedom.
\end{proof}

\subsection{Construction for Theorem~\ref{thm:varsep}}
\label{app:varsep}

A crossing forced at layer $h$ flips the suffix of length $H-h+1$ and changes the loss on exactly that many layers. Within a single layered tree the price and the variation therefore move together. A single-coordinate change fares no better in this regime, where a crossing of depth $D$ must first clear an optimality margin of order $D$ before the displaced face pays an amount of the same order. The separation therefore turns on the one asymmetry between the two quantities --- loss variation depends on the loss sequence alone and stays invariant to the MDP, whereas the face-crossing price reads the normal fan and so depends on the transitions. We hold the loss change fixed across two instances and let the geometry alone decide whether it reroutes a single step or the whole horizon. Both have horizon $H$, binary actions, and deterministic transitions from one start state, so a deterministic policy is a root-to-leaf action string. Throughout, fix $\gamma>0$ with a small $\kappa\in(0,\gamma)$, and set $R=\gamma(H-1)$.

\paragraph{The late instance.} A separable chain places one state at each of $H$ layers. At every non-final layer the action changes neither the cost nor the next state. A fixed preference of $\kappa$ for action $0$ enters there in both rounds, which keeps the round-$1$ optimum a single vertex with no degenerate face. The last layer costs $(\,0,\ R-\gamma\,)$ in round $1$ and $(\,0,\ -\gamma\,)$ in round $2$, where $R-\gamma=\gamma(H-2)\ge0$. Action $0$ is then uniquely optimal at every layer in round $1$, so $F_1$ is the single path taking action $0$ throughout. Action $1$ becomes optimal at the last layer in round $2$, yet the old path still takes action $0$ there and pays exactly $\gamma$. The only coordinate that changes is the last layer's action-$1$ entry, which moves from $R-\gamma$ to $-\gamma$, a change of magnitude $R$.

\paragraph{The early instance.} We can replace the chain with a branching tree. From the root, action $1$ leads to a toll node at layer $1$, while action $0$ commits to an expensive line that costs $\gamma$ per layer for the rest of the horizon. At the toll node, action $1$ pays a toll and unlocks a cheap line that costs $0$ thereafter, and a fixed bonus $\kappa$ on the committing root action again makes the round-$1$ optimum a unique vertex. Between rounds only one coordinate changes, the toll, which stands at $R=\gamma(H-1)$ in round $1$ and drops to $0$ in round $2$. The round-$1$ toll equals the whole-horizon cost of the expensive line, so the committed expensive path is optimal. Once the toll vanishes in round $2$, the new optimum routes through the toll node, and the old committed path now pays $\gamma$ on each of the $H-1$ expensive layers. The round-to-round change again touches a single coordinate, by the same magnitude $R$.

\begin{proof}[Proof of Theorem~\ref{thm:varsep}]
	In both instances the round-to-round loss change touches a single coordinate of magnitude $R$. A one-hot vector has the same $L_p$ norm for every $p\in[1,\infty]$, so $V_2^\ell=R$ under every dual norm. The early instance leaves its committed path at round-$2$ value $\gamma(H-1)-\kappa$ against a new optimum of $0$, giving $\Gamma_2^{\mathrm{cross}}=\gamma(H-1)-\kappa$. The late instance reverses the figures, its old path resting at $0$ against a new optimum of $-\gamma$, giving $\Gamma_2^{\mathrm{cross}}=\gamma$. Their ratio tends to $H-1$ as $\kappa\to0$. Both old faces are vertices, and Theorem~\ref{thm:decomp} equates the dynamic regret with the price on each instance. Any valid $B$ must therefore satisfy $B(R)\ge\gamma(H-1)-\kappa$ to clear the early instance. That same $B(R)$ then overshoots the late regret $\gamma$ by the factor $H-1-\kappa/\gamma$.
\end{proof}

\begin{remark}[Why two MDPs are needed]
	A defender of variation might still ask for a single fixed MDP carrying two loss sequences. A one-coordinate change there keeps the price-to-trigger ratio near a small constant. The collinearity above therefore forecloses that route. Figure~\ref{fig:barrier} traces this, the single-MDP ratio flattening near two as the horizon grows while the two-MDP construction reaches $H-1$. The two-MDP family therefore shows what a single MDP cannot, that the variation term stays blind to the transitions. Rather than computing them by hand, we read the prices above off the Bellman backup of Theorem~\ref{thm:bellman}, where they agree with the value-gap definition.
\end{remark}

\begin{figure}[t]
	\centering
	\includegraphics[width=0.6\linewidth]{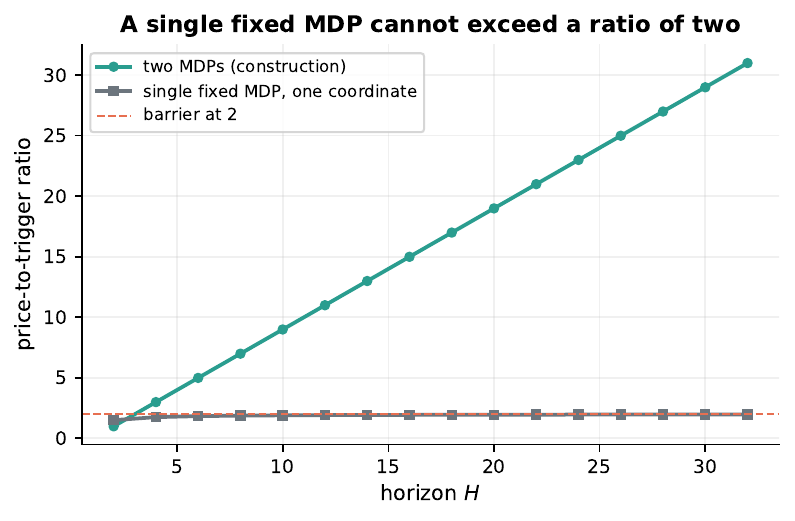}
	\caption{Why two MDPs are needed. Inside a single fixed MDP, a one-coordinate loss change that forces a crossing holds the price-to-trigger ratio near two at every horizon, whereas the two-MDP construction of Theorem~\ref{thm:varsep} reaches $H-1$. Both curves are computed by the Bellman backup of Theorem~\ref{thm:bellman}.}
	\label{fig:barrier}
\end{figure}

\section{Experimental details}
\label{app:experiments}

All experiments use fixed transitions and adversarial losses. We leave the moving-transition case aside. There the polytope $\Q$ deforms between rounds, so a face of one round need not remain feasible in the next and the fixed-fan decomposition no longer applies. Every figure averages $32$ seeds, and error bars or shaded bands show standard errors for the displayed means. The simplex, layered, and degenerate benchmarks compute the geometric quantities in closed form for comparison with the measured regret. 

\paragraph{Configuration.}
Every benchmark uses $32$ seeds, indices $0$ through $31$, and reports the best setting over the step-size and method-specific sweeps in Table~\ref{tab:hparams}. The per-step loss unit in the priced quantities is $\gamma$.

\begin{table}[h]
	\caption{Hyperparameters. Step sizes $\eta$ and the method-specific parameters are swept over the ranges shown, and every benchmark uses $32$ seeds.}
	\label{tab:hparams}
	\begin{center}
		\small
		\begin{tabular}{@{}p{0.15\linewidth}p{0.77\linewidth}@{}}
			\toprule
			\textbf{Benchmark} & \textbf{Settings} \\
			\midrule
			Simplex & $K\in\{16,32,64,128\}$ actions, horizon $T{=}5000$, $\gamma{=}0.25$, $\eta\in\{0.04,0.08,0.15,0.3,0.6,1.0\}$, $24$ trials per regime, four regimes \\
			Layered tree & $H\in\{8,12,16,20\}$, $\gamma{=}0.2$, sequence length $5000$, $96$ sequences per horizon, $\eta\in\{0.04,\ldots,1.0\}$ \\
			Degenerate & horizon $20$, $\gamma{=}0.2$, prefix length $1$ to $18$, sequence length $5000$, switch probability $0.025$, four selectors \\
			Gridworld & side $15$, horizon $30$, $3000$ episodes, $\eta\in\{0.04,\ldots,0.6\}$, advantage threshold $\in\{0.02,\ldots,0.18\}$, penalty weight $\in\{0.35,0.7\}$, trust-region $\alpha{=}0.12$, four scenarios \\
			\bottomrule
		\end{tabular}
	\end{center}
\end{table}

\paragraph{Simplex normal fan.}
With a single state and $K\in\{16,32,64,128\}$ actions, $\Q$ collapses to the simplex, its fan the arrangement of the normal cones. The loss runs through four regimes --- within-cone drift, small wall-crossing drift, random high-margin crossings, and a rotating optimum. Against it we compare Hedge \citep{CesaBianchi_Lugosi_2006}, Euclidean mirror descent \citep{Beck_Teboulle_2003}, optimistic Hedge \citep{Rakhlin_Sridharan_2014}, and the previous-face tracker. Figure~\ref{fig:exp-simplex} contrasts loss variation with the face price by regime, then plots the previous-face tracker's regret against $\Gamma_{2:T}^{\mathrm{cross}}$. The within-cone regime carries large variation at zero price, so any variation-based bound predicts regret where there is none, and across regimes the tracker's own regret correlates at $1.00$ with $\Gamma_{2:T}^{\mathrm{cross}}$ and only at $0.89$ with $V_T^\ell$.

\begin{figure}[h]
	\centering
	\includegraphics[width=0.92\linewidth]{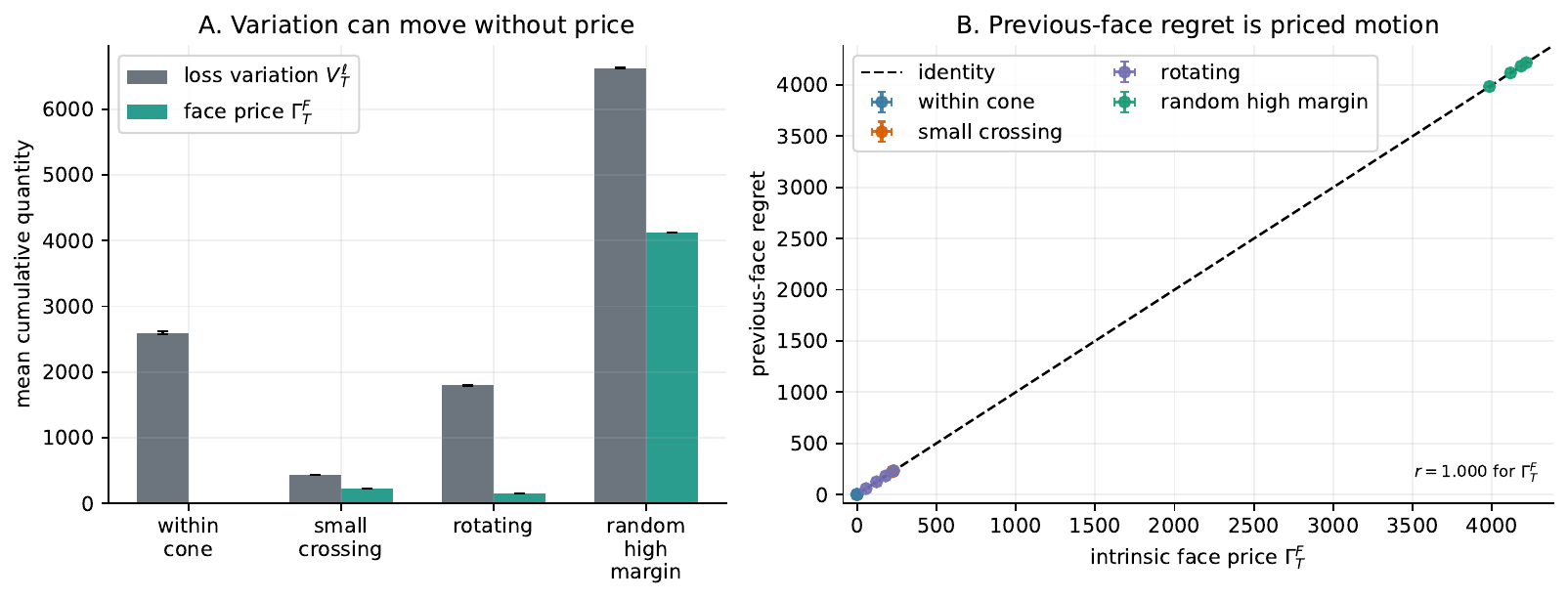}
	\caption{Simplex normal fan. Left, mean loss variation and intrinsic face price by regime, with uncertainty over trials. Right, previous-face tracker regret against $\Gamma_{2:T}^{\mathrm{cross}}$, where the identity line gives the exact priced-face-motion prediction. Error bars are standard errors over $32$ seeds and $24$ trials per action-size/regime.}
	\label{fig:exp-simplex}
\end{figure}

\paragraph{Causal anisotropy in a layered tree.}
A deterministic binary tree of horizon $H\in\{8,12,16,20\}$, in which each policy is a root-to-leaf path and one path is optimal. A crossing is forced at a chosen layer and we record $\Gamma_t^{\mathrm{cross}}/\gamma$. The identity itself is in the main text. Figure~\ref{fig:exp-layered-pred} compares loss variation, raw path length, causal path length, and $\Gamma_{2:T}^{\mathrm{cross}}$ as predictors of mirror-descent regret across the sequences. In this structured family all four explain nearly all of the regret variation, and the experiment that separates $\Gamma_{2:T}^{\mathrm{cross}}$ from loss variation is the simplex above.

\begin{figure}[h]
	\centering
	\includegraphics[width=0.6\linewidth]{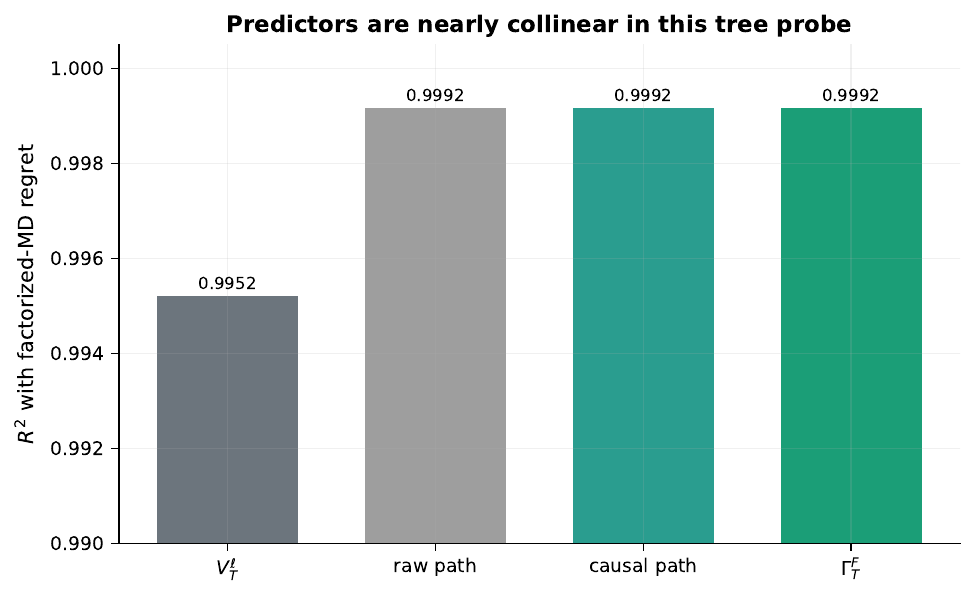}
	\caption{Layered tree. Predictive power of four candidate path-motion summaries for dynamic regret, using $32$ seeds and $96$ sequences per horizon. In this family the predictors move together, which is why the decisive separation between price and variation is made on the simplex rather than here.}
	\label{fig:exp-layered-pred}
\end{figure}

\paragraph{Degenerate optimal faces.}
All paths sharing a prefix make up the optimal set in this benchmark's layered tree, and those prefixes coarsen, refine, and conflict across $5000$ rounds as four within-face selectors compete. An \emph{oracle} observes $\ell_t$ and plays the loss-minimizing point $\argmin_{u\in F_{t-1}}\ip{u}{\ell_t}$, which achieves zero selection error by construction. The \emph{sticky} Bregman selector instead pulls toward its previous representative $z_t$ by minimizing $\ip{u}{m_t}+\lambda D_\psi(u,z_t)$, as in Proposition~\ref{prop:bregsel}. \emph{Lexicographic} selection applies a fixed, loss-independent tie-break over paths, while \emph{random} selection draws a uniform point of $F_{t-1}$. The main text reports the regret stack, and Figure~\ref{fig:exp-degen-dim} shows the selection error growing with the dimension of the free face, the component that the sticky projection controls.

\begin{figure}[h]
	\centering
	\includegraphics[width=0.6\linewidth]{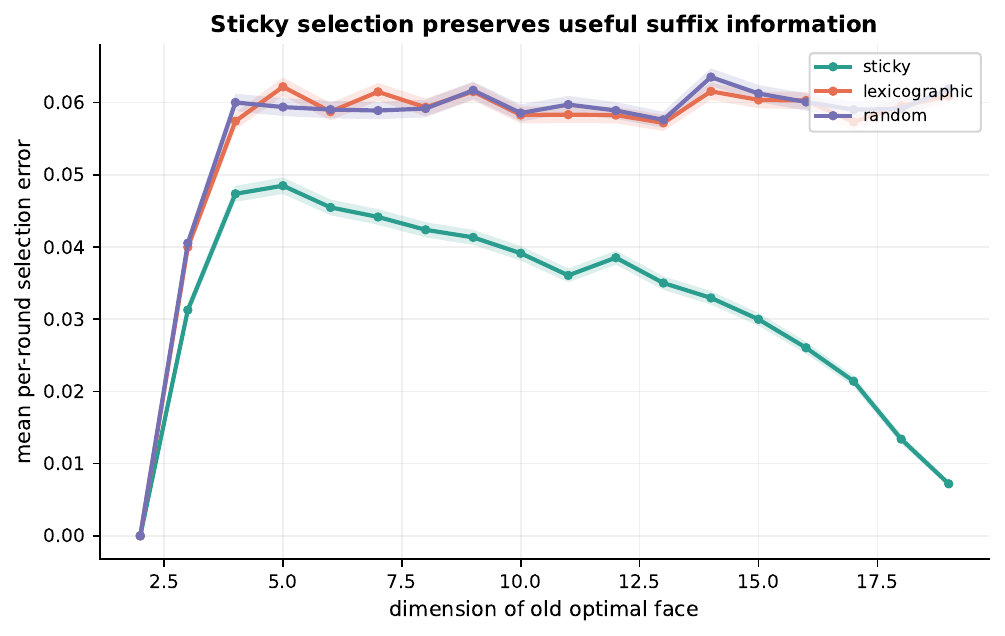}
	\caption{Degenerate faces. Within-face selection error against the dimension of the free face, over $32$ seeds and $5000$-round sequences. A larger free face leaves more room for a poor representative, and the sticky selector keeps the error below the lexicographic and random ones, with bands showing standard errors within each dimension.}
	\label{fig:exp-degen-dim}
\end{figure}

\paragraph{Advantage-thresholded gridworld control.}
Following the standard tabular gridworld convention \citep{Sutton_Barto_2018}, our custom non-stationary gridworlds of side $15$ and horizon $30$ supply four scenarios, a moving goal, a rotating field, changing obstacles, and a boundary crossing. Against them we run mirror descent \citep{Nemirovskii_IUdin_1983,Beck_Teboulle_2003}, a trust-region update \citep{Schulman_Levine_etal_2017}, optimistic mirror descent \citep{Rakhlin_Sridharan_2014}, and the advantage-thresholded update. For each we sweep step sizes, the advantage threshold, and the penalty weight, then report its best setting. Figure~\ref{fig:exp-grid} reports the best cumulative regret of every method, and the thresholded update comes in lowest in each scenario. Figure~\ref{fig:exp-grid-ts} then tracks per-episode regret together with the occupancy mass near the fan walls over training, where the thresholded update holds more mass near those walls in the scenarios whose optimum moves sharply.

\begin{figure}[h]
	\centering
	\includegraphics[width=0.78\linewidth]{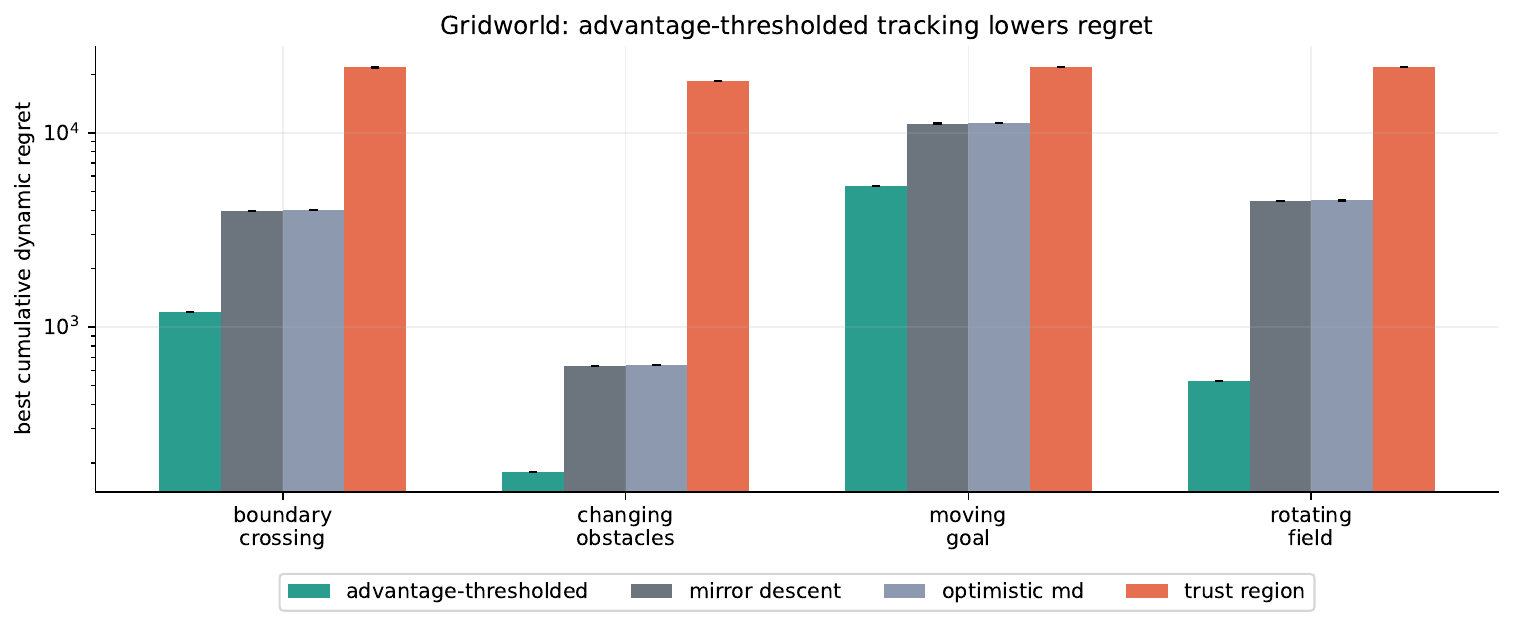}
	\caption{Best cumulative dynamic regret of four updates on four non-stationary gridworld scenarios, shown on a log scale. Bars are means over $32$ seeds with standard-error bars. The thresholded update is lowest in every scenario.}
	\label{fig:exp-grid}
\end{figure}

\begin{figure}[h]
	\centering
	\includegraphics[width=0.95\linewidth]{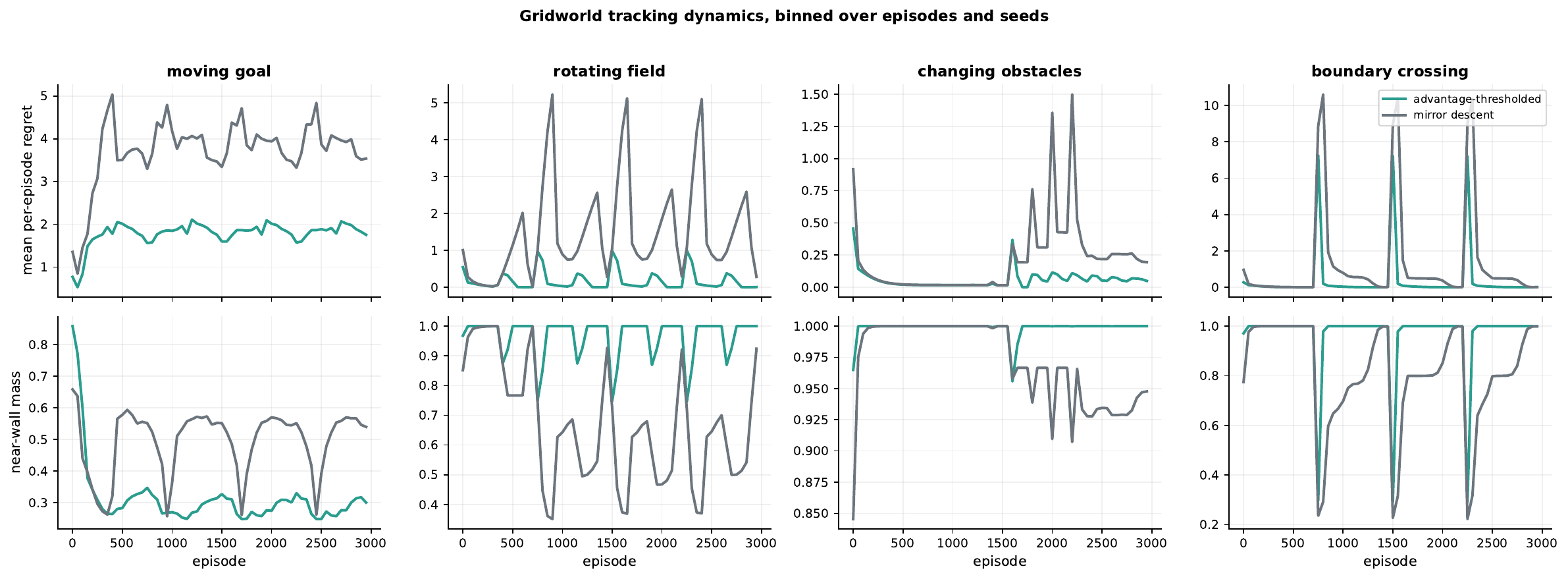}
	\caption{Gridworld over training, the thresholded update (teal) against mirror descent (gray), with curves binned over episodes and $32$ seeds. Rows, top to bottom, are per-episode dynamic regret and occupancy mass near the fan walls, across the four scenarios.}
	\label{fig:exp-grid-ts}
\end{figure}

\end{document}